\documentclass{article}
\usepackage[preprint,nonatbib]{neurips_2020}
\usepackage{neurips_2020}
\usepackage{hyperref}       

\usepackage{booktabs}       
\usepackage{amsfonts}       
\usepackage{nicefrac}       
\usepackage{thm-restate}
\usepackage{microtype}      
\usepackage{xcolor}
\usepackage{graphicx}
\usepackage{enumitem}
\usepackage{amsmath,amsfonts,amsthm,bm}
\usepackage{subcaption,graphicx}
\usepackage{caption}
\captionsetup[table]{skip=10pt plus 0.01pt}
\usepackage{wrapfig}
\usepackage{stmaryrd}
\usepackage{amssymb,amsmath,amsthm}

\usepackage[ruled,vlined,linesnumbered]{algorithm2e}

\usepackage{etoolbox}

\makeatletter
\makeatother

\SetKwInput{KwInput}{Input}                
\SetKwInput{KwOutput}{Output}
\usepackage{standalone}
\usepackage{tikz}
\usetikzlibrary{positioning}
\usetikzlibrary{calc,arrows,positioning, fit}

\usepackage{float}
\raggedbottom
\setlength{\textfloatsep}{0.31cm}

\renewcommand{\[}{\begin{eqnarray}}
\renewcommand{\]}{\end{eqnarray}}

\DeclareMathOperator*{\E}{\mathbb{E}}

\usepackage{mathbbol}

\usepackage{thm-restate}

\newtheorem{proposition}{Proposition}

\DeclareMathOperator*{\argmax}{arg\,max}
\DeclareMathOperator*{\argmin}{arg\,min}
\newcommand{\indep}{\perp \!\!\! \perp}
\title{Collegial Ensembles}
\author{Authors}
\date{May 2020}
\author{
  Etai Littwin \; Ben Myara \; Sima Sabah \; Joshua Susskind \; Shuangfei Zhai\; Oren Golan\;  \\
  \texttt{Apple Inc.} \\
  \texttt{\{elittwin, bmyara, sima, jsusskind, szhai, ogolan\}@apple.com}
}
\begin{document}

\maketitle

\begin{abstract}
Modern neural network performance typically improves as model size increases. A recent line of research on the Neural Tangent Kernel (NTK) of over-parameterized networks indicates that the improvement with size increase is a product of a better conditioned loss landscape. In this work, we investigate a form of over-parameterization achieved through ensembling, where we define collegial ensembles (CE) as the aggregation of multiple independent models with identical architectures, trained as a single model. We show that the optimization dynamics of CE simplify dramatically when the number of models in the ensemble is large, resembling the dynamics of wide models, yet scale much more favorably. We use recent theoretical results on the finite width corrections of the NTK to perform efficient architecture search in a space of finite width CE that aims to either minimize capacity, or maximize trainability under a set of constraints. The resulting ensembles can be efficiently implemented in practical architectures using group convolutions and block diagonal layers. Finally, we show how our framework can be used to analytically derive optimal group convolution modules originally found using expensive grid searches, without having to train a single model.
\end{abstract}

\section{Introduction}
\begin{wrapfigure}[19]{r}{0.29\textwidth}
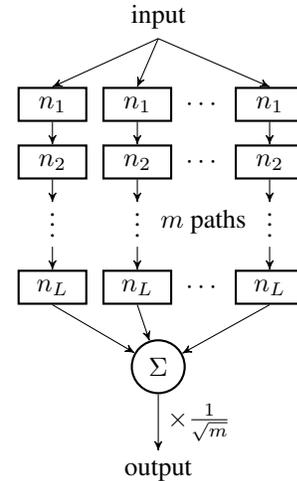

\vspace{-20pt}
\centering
\includestandalone{figures/CE_diagram}  \caption{Collegial Ensemble}
\label{fig:CE_diagram}\end{wrapfigure}
Neural networks exhibit generalization behavior in the over-parameterized regime, a phenomenon that has been well observed in practice \cite{rethinking,prove,modern,explore}. 
Recent theoretical advancements have been made to try and understand the trainability and generalization properties of over-parameterized neural networks, by observing their nearly convex behaviour at large width \cite{wide,me}. 
For a wide neural network $\mathcal{F}(x)$ with parameters $\theta$ and a convex loss $\mathcal{L}$, the parameter updates $-\mu\nabla_\theta \mathcal{L}$  can be represented in the space of functions as kernel gradient decent (GD) updates $-\mu\nabla_{\mathcal{F}}\mathcal{L}$, with the Neural Tangent Kernel (NTK) function $\mathcal{K}(x,x_j) = \nabla_\theta \mathcal{F}(x)\nabla_\theta^\top \mathcal{F}(x_j)$ operating on $x,x_j$:
\[
\Delta \theta = -\mu\nabla_\theta \mathcal{L} \longrightarrow \Delta \mathcal{F}(x) \sim -\mu\sum_j \mathcal{K}(x,x_j)\nabla_{\mathcal{F}(x_j)}\mathcal{L}
\]
In the infinite width limit, the NTK remains constant during training, and GD reduces to kernel GD, rendering the optimization a convex problem. Hence, over parameterized models in the ``large width" sense both generalize, and are simpler to optimize. In this work, we consider a different type of over-parameterization achieved through ensembling. We denote by collegial ensembles (CE) models where the output, either intermediate or final, is constructed from the aggregation of multiple identical pathways (see illustration in Fig.~\ref{fig:CE_diagram}). 
We show that the training dynamics of CE simplify when the ensemble multiplicity is large, in a similar sense as wide models, yet at a much cheaper cost in terms of parameter count. 
Our results indicate the existence of a ``sweet spot" for the correct ratio between width and multiplicity, where "close to convex" behaviour does not come at the expense of size. 
To find said ``sweet-spot", we rely on recent findings on finite corrections of gradients \cite{boris1,rtk}, though we use them in a more general manner than their original presentation. Specifically, we use the following proposition stated informally:
\begin{proposition}(Informal)\label{ass1}
Denote by $\mathcal{K}$ the NTK at initialization of a fully connected ANN with hidden layer widths $n_1,...,n_L$ and depth $L$. There exists positive constants $\alpha,C$ such that:
\[
Var(\mathcal{K}) \sim C(e^{\alpha \sum_{l=1}^Ln_l^{-1}} - 1)
\]
where the variance is measured on the individual entries of $\mathcal{K}$, with respect to the random sampling of the weights.
\end{proposition}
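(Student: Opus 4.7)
The plan is to prove this by computing the variance entry-by-entry via a recursive calculation over layers, and showing the resulting recursion solves to the claimed exponential form.

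First, I would decompose the NTK as $\mathcal{K}(x,x_j)=\sum_{l=1}^L \mathcal{K}_l(x,x_j)$, where $\mathcal{K}_l$ collects the contributions of the parameters in layer $l$. By the chain rule, each $\mathcal{K}_l$ factorizes as a product of a forward two-point quantity (involving the post-activations at layer $l-1$ evaluated at $x$ and $x_j$) and a backward two-point quantity (the pulled-back Jacobians from layer $l+1$ to $L$). At initialization with i.i.d.\ Gaussian weights, both factors concentrate around their infinite-width expectations, with fluctuations of order $n_l^{-1/2}$ accumulated at each layer.

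Second, I would invoke the finite-width moment machinery of \cite{boris1,rtk}. For a two-point observable built from Gaussian pre-activations, these results yield a recursion for the second moment of $\mathcal{K}_l$, and hence for the variance of the partial sum $S_l=\sum_{m\le l}\mathcal{K}_m$, of the schematic form
\[
V_l \;=\; V_{l-1}\bigl(1 + a_l/n_l\bigr) + b_l/n_l + o(1/n_l),
\]
where $a_l,b_l$ depend only on the nonlinearity and on the fixed point of the conjugate kernel evaluated at $(x,x_j)$. Unrolling gives
\[
V_L \;=\; V_0 \prod_{l=1}^L\!\bigl(1+a_l/n_l\bigr) \;+\; \sum_{l=1}^L (b_l/n_l)\!\!\prod_{m=l+1}^L\!\bigl(1+a_m/n_m\bigr).
\]

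Third, I would bound and combine the factors. Using $1+x \le e^x$ (with a matching lower bound $e^x(1-O(x^2))$ on the relevant range), each product satisfies $\prod_l (1+a/n_l)\asymp e^{a\sum_l 1/n_l}$, uniformly in the width profile. Aggregating the multiplicative and additive contributions and absorbing constants into $\alpha$ and $C$ yields $\text{Var}(\mathcal{K}) \sim C\bigl(e^{\alpha \sum_{l=1}^L n_l^{-1}} - 1\bigr)$, where the ``$-1$'' arises naturally because $V_0=0$ and the residual $\sum_l b_l/n_l$ telescopes into $e^{\alpha\sum 1/n_l}-1$ up to constants.

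The main obstacle is step two: rigorously justifying the recursion requires handling the joint distribution of forward pre-activations and backward Jacobians, which are not independent, and controlling all subleading corrections to show they genuinely collapse into $o(1/n_l)$ after summation. This is exactly what the Feynman-diagram bookkeeping of \cite{rtk} or the cumulant recursions of \cite{boris1} are designed for; porting their single-scale estimates to a heterogeneous width profile $(n_1,\dots,n_L)$, and ensuring that cross-layer contractions do not produce spurious $O(1/n_l n_m)$ contributions outside the exponential resummation, is the delicate part.
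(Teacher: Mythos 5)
You should first be aware that the paper does not prove Proposition~\ref{ass1} at all: it is stated informally, the authors explicitly defer to \cite{boris1} and \cite{rtk} for a proof covering only the on-diagonal entries of $\mathcal{K}$ in fully connected networks, and the extension to off-diagonal entries and to practical architectures is adopted as a working assumption backed by a Monte Carlo fit (Fig.~\ref{fig:alpha_fitting_resnet_block}). So your proposal is not competing with an in-paper argument; it is an attempt to supply one where the paper offers a citation plus empirical verification.

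As a reconstruction of how the cited works obtain the on-diagonal result, your sketch is on the right track: the layerwise decomposition $\mathcal{K}=\sum_{l}\mathcal{K}_l$ into forward--backward factors, a recursion for the second moment of the schematic form $V_l = V_{l-1}(1+a_l/n_l)+b_l/n_l$, and the resummation of $\prod_{l}(1+a_l/n_l)$ into $e^{\alpha\sum_{l} n_l^{-1}}$ with the ``$-1$'' coming from $V_0=0$ is essentially the mechanism of \cite{boris1}. Two caveats. First, the claimed form requires a single constant $\alpha$, i.e.\ layer-independent coefficients $a_l$; this holds for ReLU by scale invariance of the activation and the NTK normalization, but you assert it rather than check it, and your recursion for $V_l=\mathrm{Var}(S_l)$ must also account for the covariance between $\mathcal{K}_l$ and the partial sum $S_{l-1}$, which your schematic form silently folds into $a_l$. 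Second, and more importantly, the step you yourself flag as the delicate part --- controlling the joint distribution of forward pre-activations and backward Jacobians and the subleading corrections for \emph{off-diagonal} entries $\mathcal{K}(x,x_j)$ with $x\neq x_j$ --- is precisely what is not established in the references and what the paper does not claim to prove; it is handled there only empirically. Your proposal therefore correctly locates the gap but does not close it: carried through, it yields (a sketch of) the on-diagonal, fully connected statement the literature already covers, not the proposition as stated over all individual entries.
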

In \cite{boris1} and \cite{rtk}, Proposition~\ref{ass1} is proven for the on-diagonal entries of $\mathcal{K}$, in fully connected architectures. In this work, we assume and empirically verify (see Appendix Fig.~\ref{fig:alpha_fitting_resnet_block}) it holds in a more general sense for practical architectures, with different constants of $\alpha, C$. Since $Var(\mathcal{K})$ diminishes with width, we hypothesize that a small width neutral network behaves closer to its large width counterpart as $Var(\mathcal{K})$ decreases. Notably, similar observations using activations and gradient variance as predictors of successful initializations were presented in \cite{boris2,boris3}. Motivated by this hypothesis, we formulate a primal-dual constrained optimization problem that aims to find an optimal CE with respect to the following objectives:
\begin{enumerate}
    \item \textbf{Primal (optimally smooth):} minimize $Var(\mathcal{K})$ for a  fixed number of parameters.  
    \item \textbf{Dual (optimally compact):}~~minimize number of parameters for a fixed $Var(\mathcal{K})$.
\end{enumerate}
The primal formulation seeks to find a CE which mimics the simplified dynamics of a wide model using a fixed budget of parameters, while
the dual formulation seeks to minimize the number of parameters without suffering the optimization and performance consequences typically found in the "narrow regime". Using both formulations, we find excellent agreement between our theoretical predictions and empirical results, on both small and large scale models. 

Our main contributions are the following:
\begin{enumerate}
    \item We adapt the popular over-parameterization analysis to collegial ensembles (CE), in which the output units of multiple architecturally identical models are aggregated, scaled, and trained as a single model. 
    For ensembles with $m$ models each of width $n$, we show that under gradient flow and the L2 loss, the NTK remains close to its initial value up to a $\mathcal{O}\big((mn)^{-1}\big)$ correction. 
    \item We formulate two optimization problems that seek to find optimal ensembles given a baseline architecture, in the primal and dual form, respectively. The optimally smooth ensemble achieves higher accuracy than the baseline, using the same budget of parameters. The optimally compact ensemble achieves a similar performance as the baseline, with significantly fewer trainable parameters.
    \item We show how optimal grouping in ResNeXt \cite{resnext} architectures can be derived and improved upon using our framework, without the need for an expensive search over hyper-parameters.
\end{enumerate}
The remaining paper is structured as follows. In Sec.~\ref{sec:3} we formally define collegial ensembles, and present our results for their training dynamics in the large $m,n$ regime. In Sec.~\ref{sec:efficient_ensembles} we present our framework for architecture search in the space of collegial ensembles, and in Sec.~\ref{sec:5} we present further experimental results on the CIFAR-10/CIFAR-100 \cite{cifar} and ImageNet \cite{imagenet} datasets using large scale models. 



\section{Collegial Ensembles}\label{sec:3}
We dedicate this section to formally define collegial ensembles, and analyze their properties from the NTK perspective. Specifically, we would like to formalize the notion of the "large ensemble" regime, where its dynamic behaviour is reminiscent of wide single models. 
In the following analysis we consider simple feed forward fully connected neural network  $\mathcal{F}_\bold{n}(x,\theta): \mathbb{R}^{n_0} \to \mathbb{R}$, where the width of the hidden layers are given by $\bold{n} = \{n_l\}_{l=1}^{L} \in \mathbb{Z}_+^L$, adopting the common NTK parameterization:
\[\label{eq:vanilla}
\mathcal{F}_\bold{n}(x,\theta) = \sqrt{\frac{1}{n_L}}\theta^L(...\phi(\sqrt{\frac{2}{n_1}}\theta^2\phi(\sqrt{\frac{2}{n_0}}\theta^1 x))).
\]
where $x \in \mathbb{R}^{n_0}$ is the input, $\phi(\cdot)$ is the ReLU activation function, $\theta^l$ is the weight matrix associated with layer $l$, and $\theta$ denotes the concatenation of all the weights of all layers, which are initialized iid from a standard normal distribution. Given a dataset $X = \{x_i\}_{i=1}^N$, the empirical NTK denoted by $\mathcal{K}_\bold{n}(\theta) \in \mathbb{R}^{N\times N}$ is given by
$\mathcal{K}_\bold{n}(\theta) = \nabla_\theta \mathcal{F}_\bold{n}(\theta)\nabla_\theta^\top \mathcal{F}_\bold{n}(\theta),$
where $\mathcal{F}_\bold{n}(\theta) = [\mathcal{F}_\bold{n}(x_1,\theta),...,\mathcal{F}_\bold{n}(x_N,\theta)]^\top$.
Given the network $\mathcal{F}_\bold{n}$, we parameterize a space of ensemble members $\mathcal{F}^e(\Theta)$ by a multiplicity parameter $1\leq m$ and a width vector $\bold{n}$,  such that:
\[\label{eq:ensemble_ntk}
\mathcal{F}^e(\Theta) = \frac{1}{\sqrt{m}}\sum_{j=1}^m \mathcal{F}_{\bold{n}}(\theta_j),~~~~\mathcal{K}^e(\Theta) = \frac{1}{m}\sum_{j=1}^m \mathcal{K}_{\bold{n}}(\theta_j)
\]
where $\Theta = [\theta_1^\top...\theta_m^\top]^\top$ is the concatenation of the weights of all the ensembles, and $\mathcal{K}^e(\Theta) = \nabla_{\Theta}\mathcal{F}^e\nabla^\top_{\Theta}\mathcal{F}^e$ is the NTK of the ensemble. Plainly speaking, the network $\mathcal{F}_\bold{n}$ defines a space of ensembles given by the scaled sum of $m$ neural networks of the same $\mathcal{F}_\bold{n}$ architecture, with weights initialized independently from a normal distribution.
Since each model $\mathcal{F}_{\bold{n}}(\theta_j)$ in the ensemble is architecturally equivalent to $\mathcal{K}_\bold{n}(\theta)$, it is easy to show that the infinite width kernel is equal for both models:
$
\mathcal{K}_\infty = \lim_{min(\bold{n}) \to  \infty}\mathcal{K}_\bold{n}(\theta) = \lim_{min(\bold{n}) \to \infty}\mathcal{K}^e(\Theta).
$
We define the Neural Mean Kernel (NMK) $\mathcal{K}_\bold{n}^\infty$ as the mean of the empirical NTK: 
\[
\mathcal{K}_\bold{n}^\infty = \E_\theta[\mathcal{K}_\bold{n}(\theta)]].
\]


The NMK is defined by an expectation over the normally distributed weights, and does not immediately equal the infinite width limit of the NTK given by $\mathcal{K}_\infty$. 
The following Lemma stems from the application of the strong law of large numbers (LLN):
\begin{restatable}[Infinite ensemble]{lemma}{ensemblea}\label{lem:inf_ensemble}
The following holds:
\[
\mathcal{K}^e(\Theta) \stackrel{a.s}{\longrightarrow} \mathcal{K}_\bold{n}^\infty ~~~\textnormal{ as $m \to \infty$}. 
\]
\end{restatable}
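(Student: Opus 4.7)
The plan is to recognize that this is a direct application of the strong law of large numbers to iid random matrices, with the only technical content being a moment check on the NTK entries.

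First I would note that, by construction, the weights $\theta_1,\dots,\theta_m$ are sampled iid from the same standard-normal initialization, so the random matrices $\mathcal{K}_{\bold{n}}(\theta_j) = \nabla_\theta \mathcal{F}_{\bold{n}}(\theta_j)\nabla_\theta^\top \mathcal{F}_{\bold{n}}(\theta_j)$ form an iid sequence with common distribution equal to that of $\mathcal{K}_{\bold{n}}(\theta)$, and hence with common mean $\mathcal{K}_{\bold{n}}^\infty = \E_\theta[\mathcal{K}_{\bold{n}}(\theta)]$. Since the kernel matrix has fixed size $N\times N$ (dataset size $N$ does not grow with $m$), it suffices to establish convergence entry by entry and then take a union over finitely many entries.

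Next I would verify the integrability hypothesis needed for SLLN. For any $1\le i,j\le N$, the entry $[\mathcal{K}_{\bold{n}}(\theta)]_{ij} = \nabla_\theta \mathcal{F}_{\bold{n}}(x_i,\theta)\cdot\nabla_\theta \mathcal{F}_{\bold{n}}(x_j,\theta)$ can be written as a finite polynomial of fixed degree in the Gaussian weight entries of $\theta$, multiplied by products of ReLU indicator masks that are bounded by $1$. Because all polynomial moments of Gaussians are finite and the ReLU masks contribute a factor in $[0,1]$, every moment of $[\mathcal{K}_{\bold{n}}(\theta)]_{ij}$ is finite; in particular $\E\bigl|[\mathcal{K}_{\bold{n}}(\theta)]_{ij}\bigr|<\infty$, so Kolmogorov's SLLN applies. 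Therefore
\[
[\mathcal{K}^e(\Theta)]_{ij} \;=\; \frac{1}{m}\sum_{k=1}^{m}[\mathcal{K}_{\bold{n}}(\theta_k)]_{ij} \;\stackrel{a.s.}{\longrightarrow}\; [\mathcal{K}_{\bold{n}}^\infty]_{ij} \qquad\text{as } m\to\infty.
\nonumber
\]

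Finally I would conclude by taking the intersection of the $N^2$ probability-one events, one for each $(i,j)$ pair, which is still a probability-one event; on this event every entry converges, hence $\mathcal{K}^e(\Theta)\stackrel{a.s.}{\to}\mathcal{K}_{\bold{n}}^\infty$ in, say, Frobenius norm. There is no substantive obstacle: the only thing one has to be slightly careful about is the integrability step above, and this is only a concern because ReLU networks involve non-smooth nonlinearities — but the boundedness of the ReLU derivative by $1$ makes the bound on moments immediate. Everything else is a bookkeeping invocation of the classical SLLN.
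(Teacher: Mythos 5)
Your proposal is correct and follows essentially the same route as the paper: reduce to the entrywise average of the iid matrices $\mathcal{K}_{\bold{n}}(\theta_j)$ and invoke the strong law of large numbers. The only difference is that you explicitly verify the integrability hypothesis (finite first moment of each NTK entry, via polynomial-in-Gaussian bounds with ReLU masks bounded by one), a step the paper simply asserts; this is a welcome but minor elaboration, not a different argument.
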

We deffer the reader to Sec.~\ref{proofs} in the appendix for the full proof.
While both $\mathcal{K}_\bold{n}^\infty$ and $\mathcal{K}_\infty$ do not depend on the weights, they are defined differently. $\mathcal{K}_\bold{n}^\infty$ is defined by an expectation over the weights, and depends on the width of the architecture, whereas  $\mathcal{K}_\infty$ is defined by an infinite width limit. However, empirical observation using Monte Carlo sampling, as presented in Fig.~\ref{fig:NMK2NTK}, show little to no dependence of the NMK on the widths $\bold{n}$. Moreover, we empirically observe that $\mathcal{K}_\bold{n}^\infty \sim \mathcal{K}_\infty$ (Note that similar observations have been reported in \cite{NTK}).  
\begin{figure*}[t]
\centering
  \subcaptionbox{\label{fig:NMK2NTKa}}{\includegraphics[height=1.7in]{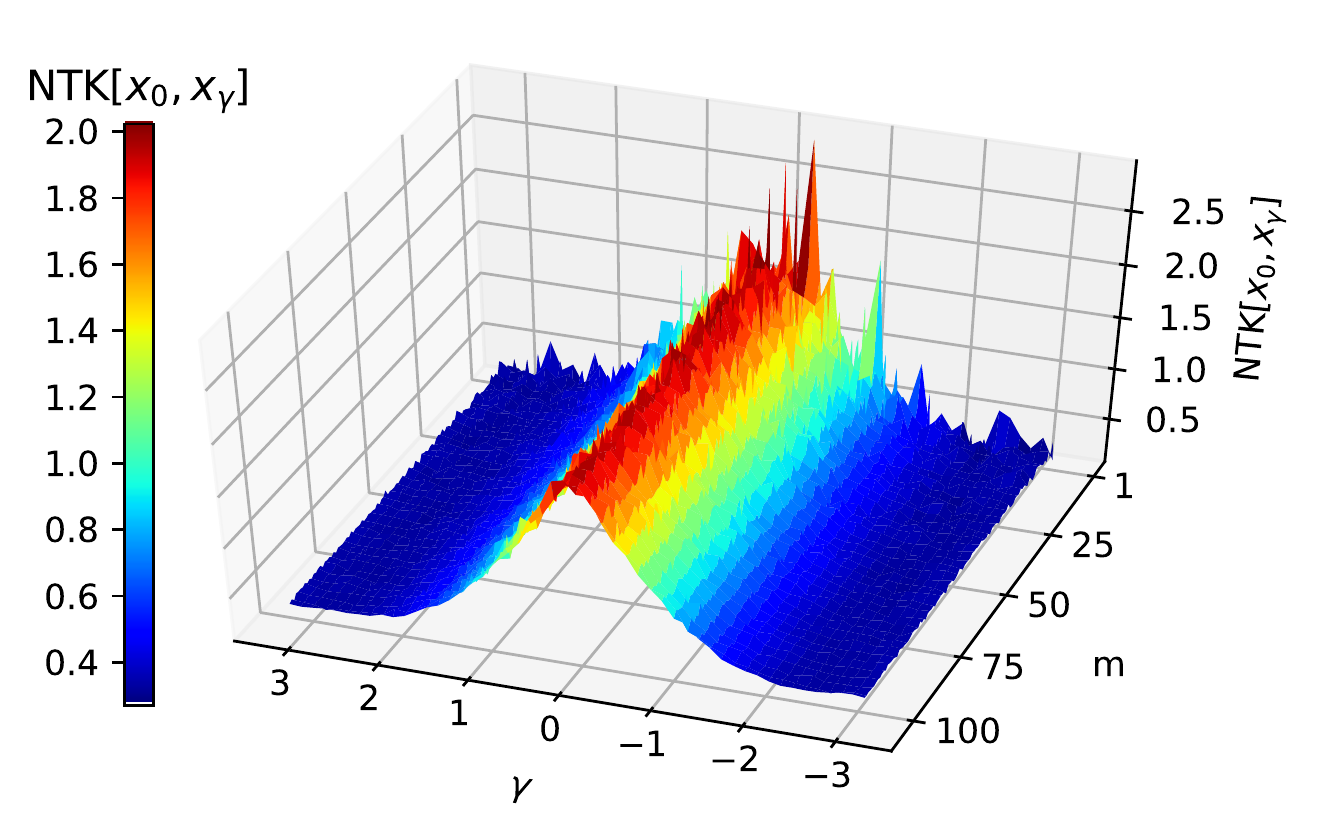}}\hspace{1em}%
  \subcaptionbox{\label{fig:NMK2NTKb}}{\includegraphics[height=1.7in]{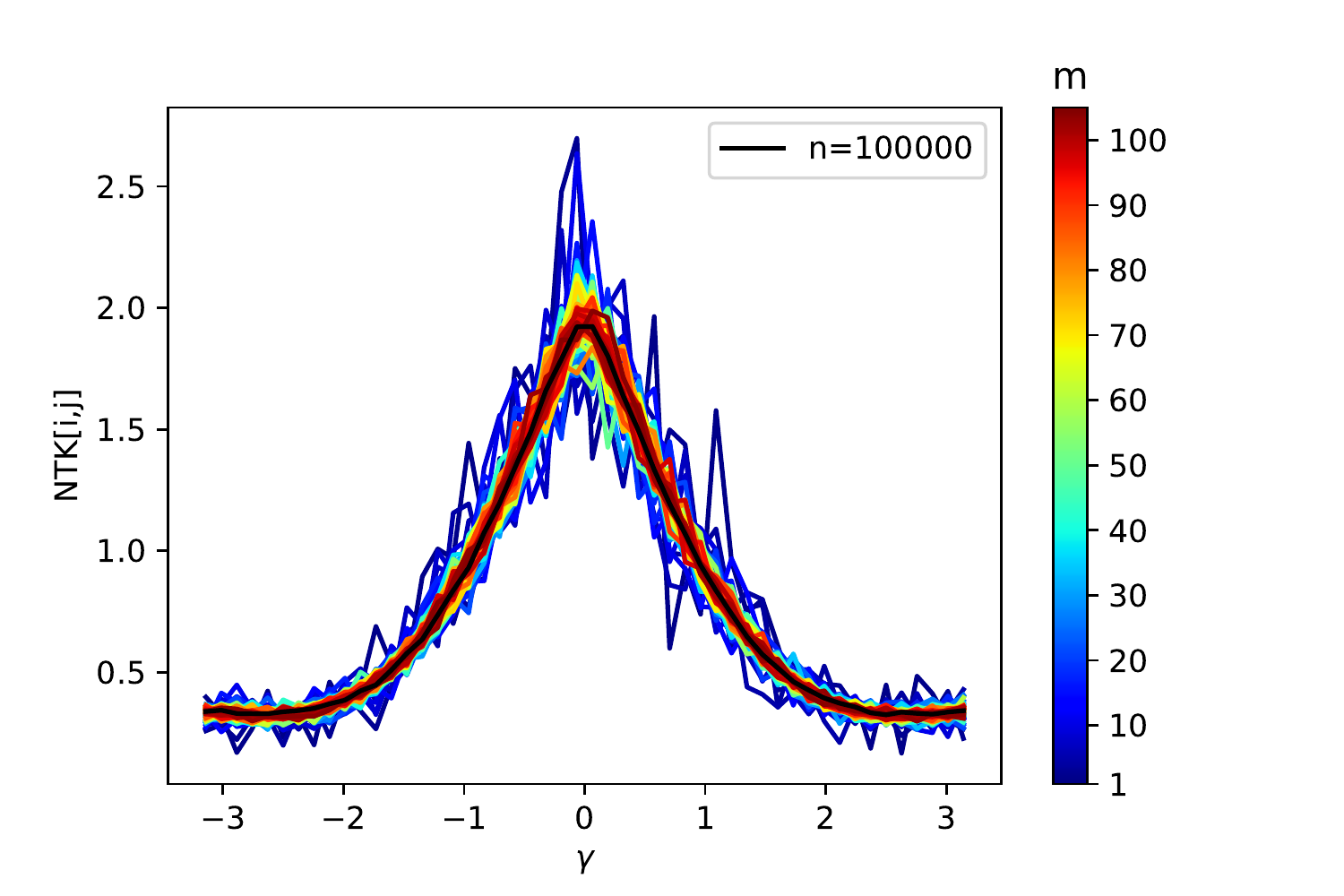}}\\
  \caption{\textbf{Convergence of the ensemble NTK to the NMK} when increasing the number of models in the ensemble. $NTK[x_0,x_{\gamma}]$ is computed for both the diagonal and
  off-diagonal elements $x_{\gamma}=[\cos(\gamma), \sin(\gamma)]$ for $\gamma\in[-\pi,\pi]$. The smoother surface as $m$ increases in (a) demonstrates the convergence of the NTK. The black line in (b), computed with $\times100$ wider model, shows that the convergence is indeed to the NMK, and the NTK mean does not depend on the width of the model. Each model in the ensemble is fully connected with $L=4$ layers and $n=1000$. }
  \label{fig:NMK2NTK}
\end{figure*}
We next show that under gradient flow, $\mathcal{K}^e(\Theta)$ remains close to its initial value for the duration of the optimization process when $mn$ is large.
Given the labels vector $\bold{y}\in\mathbb{R}^N$ and the $L_2$ cost function at time $t$, $\mathcal{L}_t = \frac{1}{2}\|\mathcal{F}^e(\Theta_t) - \bold{y}\|_2^2$, under gradient flow with learning rate $\mu$, the weights evolve over continuous time according to $\dot{\Theta_t} = -\mu\nabla^\top_{\Theta_t}\mathcal{L}_t$. 
The following theorem gives an asymptotic bound on the leading correction of the ensemble NTK over time. For simplicity, we state our result for constant width networks.

\begin{restatable}[NTK evolution over time]{theorem}{ensembleb}\label{thm:evolution}
For analytic activation functions $\phi(\cdot)$ and the $L_2$ cost function , it holds for any finite $t$:
\[
\mathcal{K}^e(\Theta_t) - \mathcal{K}^e(\Theta_0)  \sim  \mathcal{O}_p(\frac{1}{mn}) 
\]
where the notation $x_n = \mathcal{O}_p(y_n)$ states that $x_n/y_n$ is stochastically bounded.
\end{restatable}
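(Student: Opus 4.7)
The plan is to pass to per-model gradient flow equations, isolate the two small parameters $1/\sqrt{m}$ (from the ensemble averaging prefactor) and $1/\sqrt{n}$ (from the finite-width fluctuations of each member's NTK), and show they combine multiplicatively into the claimed $1/(mn)$ rate. The argument extends the finite-width NTK stability of \cite{boris1,rtk} from a single network to an ensemble, with analyticity of $\phi$ providing the justification for Taylor expansions and the diagrammatic estimates of those works.

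First I would derive the per-member dynamics. Since $\nabla_{\theta_j}\mathcal{F}^e = \tfrac{1}{\sqrt{m}}\nabla_{\theta_j}\mathcal{F}_\bold{n}(\theta_j)$, gradient flow on $\mathcal{L}_t$ reads
$$\dot{\theta}_j(t) = -\tfrac{\mu}{\sqrt{m}}\nabla_{\theta_j}^{\top}\mathcal{F}_\bold{n}(\theta_j)\,r(t), \qquad \dot{r}(t) = -\mu\,\mathcal{K}^e(\Theta_t)\,r(t),$$
with $r(t) := \mathcal{F}^e(\Theta_t)-\bold{y}$. The second equation keeps $\|r(t)\|$ bounded on any finite interval provided $\mathcal{K}^e$ stays close to its initial value (verified self-consistently below). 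Differentiating $\mathcal{K}^e(\Theta_t) = \tfrac{1}{m}\sum_j \mathcal{K}_\bold{n}(\theta_j(t))$ in time and substituting the per-member ODE gives
$$\dot{\mathcal{K}}^e(\Theta_t) = -\frac{\mu}{m^{3/2}}\sum_{j=1}^{m} T_j(\theta_j(t))\,r(t), \qquad T_j(\theta) := \nabla_\theta\mathcal{K}_\bold{n}(\theta)\,\nabla_\theta^{\top}\mathcal{F}_\bold{n}(\theta),$$
which exhibits the two critical features: an explicit $m^{-3/2}$ factor and a sum of iid-at-$t{=}0$ copies of the third-order ``meta-NTK'' tensor.

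At initialization, analyticity of $\phi$ makes $T_j$ a polynomial in the Gaussian weights of model $j$; the diagrammatic bounds of \cite{boris1,rtk} control its magnitude at scale $1/n$ and show that its mean cancels to leading order via the symmetry and Wick pairings of the Gaussian initialization. Averaging $m$ independent copies, a CLT-style concentration bound then yields $\|\sum_j T_j(\theta_j(0))\| = \mathcal{O}_p(\sqrt{m}/n)$, and the $m^{-3/2}$ prefactor in $\dot{\mathcal{K}}^e$ produces $\dot{\mathcal{K}}^e(\Theta_0) = \mathcal{O}_p(1/(mn))$, i.e., the desired scaling at $t=0^+$.

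The main obstacle is propagating this instantaneous bound to finite $t$: once training begins the trajectories $\theta_j(s)$ become correlated through the shared residual $r(s)$, destroying the iid structure exploited above. I would close this with a bootstrap of the type familiar from NTK stability proofs. Assume $\|\mathcal{K}^e(\Theta_s) - \mathcal{K}^e(\Theta_0)\| \lesssim 1/(mn)$ on $[0,t]$; then the per-model weight displacement $\|\theta_j(s)-\theta_j(0)\|$ is controlled by the $1/\sqrt{m}$ factor in the ODE for $\theta_j$, analytic Taylor expansion of $T_j(\theta_j(s))$ around $\theta_j(0)$ yields a leading term that preserves the CLT concentration plus remainders of strictly smaller order in $1/(mn)$, and a Gr\"onwall-type estimate closes the loop on any finite interval. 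This extends the initial-time bound uniformly, establishing the theorem.
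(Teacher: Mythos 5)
Your proposal shares the paper's two essential inputs and differs mainly in how the instantaneous estimate is propagated to finite $t$. At $t=0$ your computation is the paper's first-order Taylor coefficient in disguise: your tensor $T_j=\nabla_{\theta}\mathcal{K}_{n}(\theta_j)\,\nabla_{\theta}^{\top}\mathcal{F}_{n}(\theta_j)$ is exactly $\hat{\Gamma}_j\mathcal{K}_n(\theta_j)=\mathcal{R}_j^{(2)}$ in the paper's notation, your bound $\sum_j T_j=\mathcal{O}_p(\sqrt{m}/n)$ is their $\frac{1}{m}\sum_j\mathcal{R}_j^{(2)}=\mathcal{O}_p(\frac{1}{\sqrt{m}\,n})$, and your residual factor $r(t)=\mathcal{O}_p(1)$ together with the explicit $m^{-3/2}$ prefactor reproduces their pairing of the $u_0\ge 2$ factor with the $u_r=0$ factor. (One slip in your overview: you advertise the per-member fluctuation scale as $1/\sqrt{n}$, which would only yield $1/(\sqrt{m}\,n)$ overall; your actual estimate correctly uses the $1/n$ scale, which requires the correlator bounds the paper imports from \cite{guy}, not just the NTK-variance results of \cite{boris1,rtk}.) Where you genuinely depart is at finite $t$: the paper expands $\mathcal{K}^e(\Theta_t)$ in $t$ to all orders with every coefficient evaluated at initialization, and shows via the combinatorial classification $\bold{S}^r$ that each order contains at least one $\mathcal{O}_p(\frac{1}{\sqrt{m}n})$ factor and at least one $\mathcal{O}_p(\frac{1}{\sqrt{m}})$ factor; you instead run a bootstrap/Gr\"onwall continuation in the style of standard NTK-stability proofs.

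The caveat is that your continuation step is asserted exactly where it is hardest. Expanding $T_j(\theta_j(s))$ about $\theta_j(0)$, the first correction involves $\nabla T_j\,\nabla^{\top}\mathcal{F}_n$, an even correlator whose mean does \emph{not} cancel (it is $\mathcal{O}(1/n)$, not $o(1/n)$), so the CLT gain of $m^{-1/2}$ is lost for that term; it is rescued only because the displacement $\theta_j(s)-\theta_j(0)$ carries its own explicit $m^{-1/2}$ --- which is precisely the role of the $u_r=0$ condition in the paper's bookkeeping. Making your bootstrap airtight therefore requires tracking, at every order of the displacement expansion, how many explicit $m^{-1/2}$ factors accompany each non-centered correlator, and at that point you have essentially reconstructed the paper's sum over the index set $\bold{S}^r$. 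Neither argument is fully rigorous as written (the paper assumes the Taylor expansion commutes with the large-$m,n$ limits and leans on a conjectured bound from \cite{guy}), so I would call your route viable and substantively equivalent, with the finite-$t$ propagation left as the real remaining work rather than a fatal gap.
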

We verified the result of Theorem~\ref{thm:evolution} for fully connected networks trained on MNIST, results are summarized in Fig.~\ref{fig:dynamics} in the appendix, and the full proof is in Sec.~\ref{proofs} in the appendix.
Large collegial ensembles therefore refer to a regime where $mn$ is large. In the case of infinite multiplicity, optimization dynamics reduces to kernel gradient descent with $\mathcal{K}_\bold{n}^\infty$, rather than $\mathcal{K}_\infty$ as the relevant kernel function. A striking implication arises from Theorem~\ref{thm:evolution}.
The total parameter count in the ensemble is linear in $m$, and quadratic in $n$, hence it is much more parameter efficient to increase $m$ rather than $n$. 
Since the "large" regime depends on both $n$ and $m$, 
CE possess an inherent degree of flexibility in their practical design.
As we show in the following section, this increased flexibility allows the design of both parameter efficient ensembles, and better performing ensembles, when compared with the baseline model.



\section{Efficient Ensembles}\label{sec:efficient_ensembles}
In this section, we use Proposition~\ref{ass1} to derive optimally smooth and compact ensembles. We parameterize the space of ensembles using $m,\bold{n}$, and a baseline architecture $\mathcal{F}_{\bold{\tilde{n}}}$, where $\bold{\tilde{n}}$ is the width vector of the baseline model. Denote by $\beta(\bold{n})$ the total parameter count in  $\mathcal{F}_{\bold{n}}$, we define parameter efficiency $\rho(m,\bold{n})$ by the ratio between the parameter count in the baseline model $\beta_s \triangleq \beta(\bold{\tilde{n}})$,  and the parameter count in the ensemble given by $\beta_e \triangleq m\beta(\bold{n})$:
\[
\rho(m,\bold{n}) \triangleq \frac{\beta_s}{\beta_e} = \frac{\beta_s}{m\beta(\bold{n})}.
\label{eq:rho}
\]

Using Proposition.~\ref{ass1}, the variance of $\mathcal{K}_{\bold{n}}$ as a function of widths $\bold{n}$ and depth $L$, is given by:
\[\label{eq:var0}
Var\big(\mathcal{K}_{\bold{n}}(\theta)\big)  \sim (e^{\alpha \sum_{l=1}^L n_l^{-1}} - 1).
\]
for some value of $\alpha$.

\paragraph{Primal formulation:}
We cast the primal objective as an optimization problem, where we would like to find parameters $m^p,\bold{n}^p$ that correspond to the smoothest ensemble:
\[\label{eq:obj}
m^p,\bold{n}^p = \argmin_{m,\bold{n}}Var\Big(\mathcal{K}^e(\Theta)\Big)~~~ s.t ~~~\rho(m,\bold{n}) = 1.
\]

Since the weights for each model are sampled independently, it holds that:
\[\label{eq:var}
Var\big(\mathcal{K}^e(\Theta)\big)  = \sum_{j=1}^m \frac{1}{m^2}Var\big(\mathcal{K}_{\bold{n}}(\theta_j)\big) = \frac{(e^{\alpha \sum_{l=1}^Ln_l^{-1}} - 1)}{m}.
\]

Equating the parameter count in both models to maintain a fixed efficiency, we can derive for each $\bold{n}$ the number of the models $m^p(\bold{n})$ in the primal formulation:
\[\label{primal}
m^p(\bold{n}) = \frac{\beta_s}{\beta(\bold{n})}~~~\longrightarrow \bold{n}^p = \argmin_{\bold{n}}  \Big[\frac{(e^{\alpha \sum_{l=1}^L n_l^{-1}} - 1)}{m^p(\bold{n})}\Big].
\]
The optimal parameters $\bold{n}^p$ can be found using a grid search.
\paragraph{Dual formulation:}The dual formulation can be cast as an optimization problem, with the following objective:
\[\label{eq:dual}
m^d,\bold{n}^d = \argmax_{m,\bold{n}}\rho(m,\bold{n}) ~~~ s.t ~~~Var\big(\mathcal{K}^e(\Theta)\big) = Var\big(\mathcal{K}_\bold{\tilde{n}}(\theta)\big).
\]
Matching the smoothness criterion using Eq.~\ref{eq:var}, we can derive for each $\bold{n}$ the number of models $m^d(\bold{n})$ in the dual formulation:
\[\label{dual}
 m^d(\bold{n}) = \frac{(e^{\alpha \sum_{l=1}^L n_l^{-1}} - 1)}{(e^{\alpha \sum_{l=1}^L \tilde{n}_l^{-1}} - 1)}~~~\longrightarrow \bold{n}^d = \argmax_{\bold{n}} \Big[\frac{1}{m^d(\bold{n})}\frac{\beta_s}{\beta(\bold{n})}\Big].
\]
Ideally, we can find $m^d,\bold{n}^d$ such that the total parameter count in the ensemble is considerably reduced. 
Equating the solutions for both the primal and dual problems in Eq.~\ref{primal} and Eq.~\ref{dual}, it is straightforward to verify that $\bold{n}^d = \bold{n}^p$, implying strong duality. Therefore, the primal and dual solutions differ only in the optimal multiplicity $m(\bold{n})$ of the ensemble. 
Both objectives are plotted in Fig.~\ref{fig:dualthm} using a feedforward fully connected network baseline with $L=6$ and constant width $\tilde{n}=500$.

Note that the efficient ensembles framework outlined in this section can readily be applied with different efficiency metrics. For instance, instead of using the parameter efficiency, one could consider the FLOPs efficiency (see Appendix~\ref{appendix:flops_efficiency}).

\begin{figure*}[t]
\centering
\subcaptionbox{Primal formulation\label{fig3:a}}{\includegraphics[width=.45\textwidth]{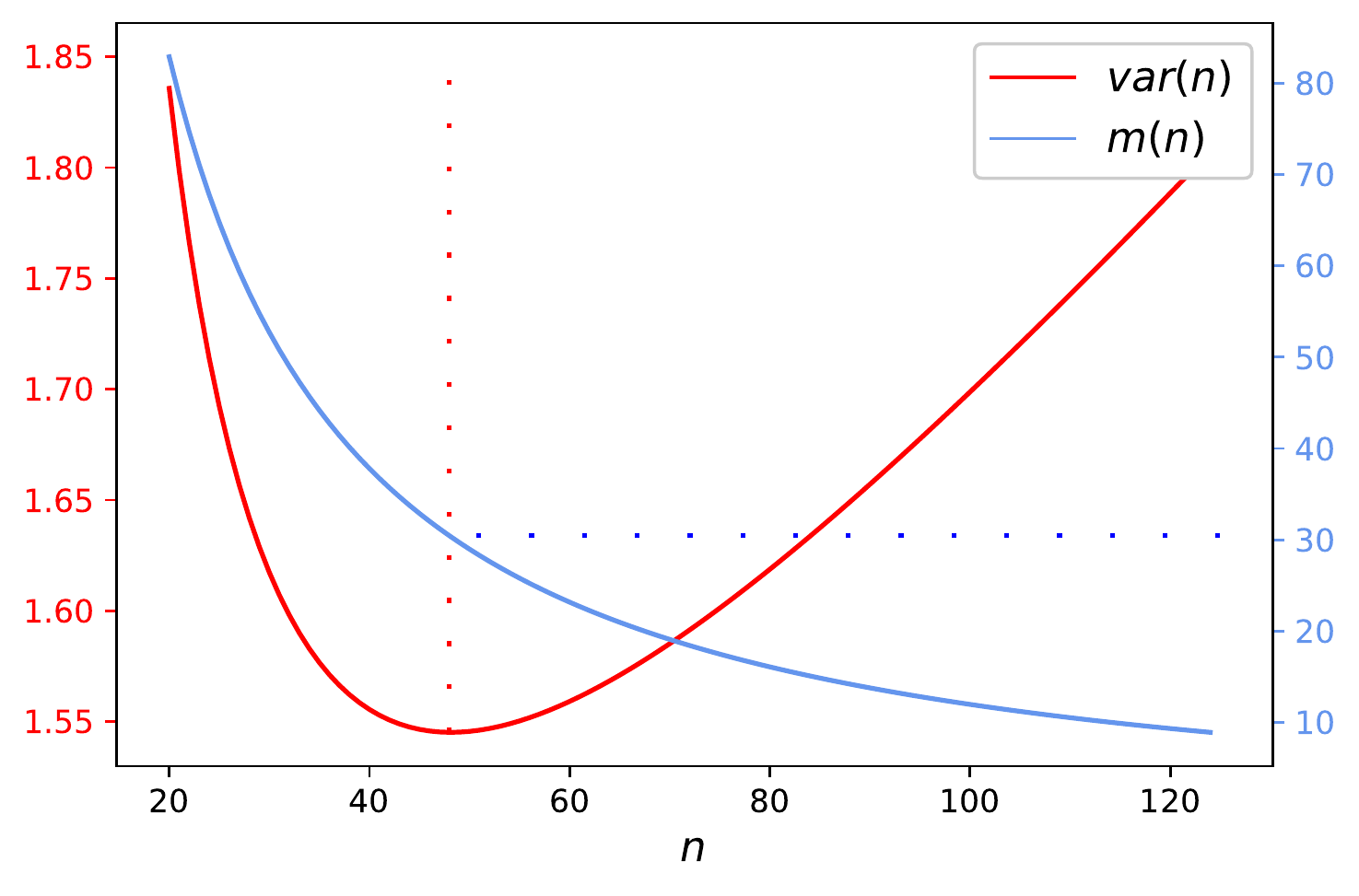}}\hfill
  \subcaptionbox{Dual formulation\label{fig3:b}}{\includegraphics[width=.45\textwidth]{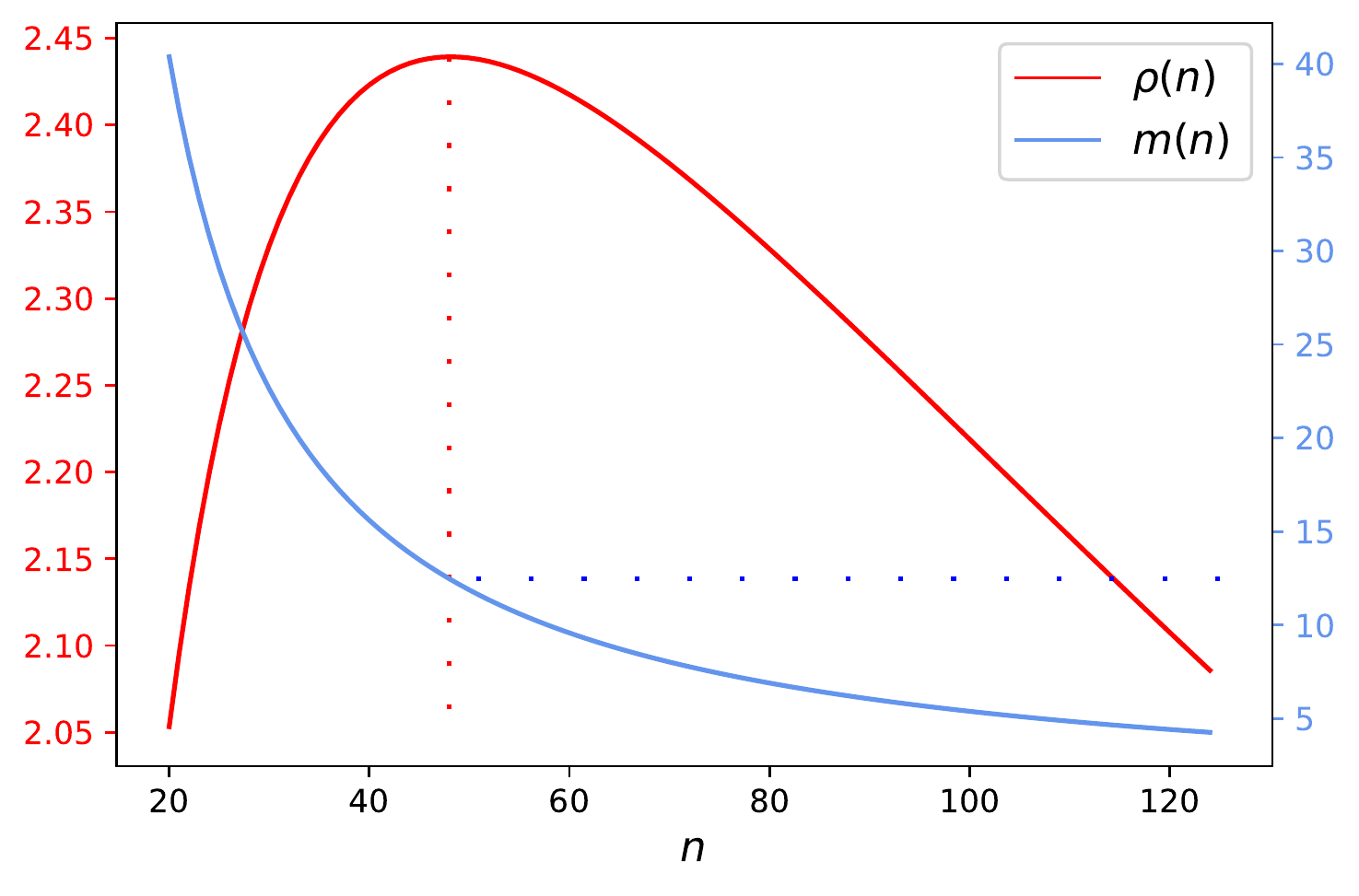}}
  \caption{\textbf{Primal and dual objective curves} for a baseline feedforward fully connected network with $L=6$ layers, $\tilde{n}=500$, and $n_0=748$. (a) The minimizer of the primal objective (red) is achieved for $ n=48$ and $m(48)\approx30$. (b) The maximizer of the dual objective (red) is achieved for $n=48$ and $m(48)\approx12$, achieving an efficiency value $\rho(48)\approx2.45$.}
  \label{fig:dualthm}
\end{figure*}

\section{Experiments}\label{sec:5}
In the following section we conduct experiments to both validate our assumptions, and evaluate our framework for efficient ensemble search. Starting with a toy example, we evaluate the effect of $Var(\mathcal{K})$ and $\beta_e$ on test performance using fully connected models trained on the MNIST dataset. For the latter experiments, we move to larger scale models trained on CIFAR-10/100 and the ImageNet~\cite{imagenet} datasets.
\subsection{Ablation Study – MNIST}

An effective way to improve the performance of a neural network  is to increase its size. Recent slim architectures, such as ResNeXt, demonstrate it possible to maintain accuracy while significantly reducing parameter count.
In Fig.~\ref{fig:primaldualthm} we provide further empirical evidence that capacity of a network by itself is not a good predictor of performance, when decoupled from other factors.

Specifically, we demonstrate strong correlation between the empirical test error and the variance $Var(\mathcal{K)}$, while $\beta_e$ is kept constant (primal). On the other hand, increasing $\beta_e$ while keeping $Var(\mathcal{K})$ constant  (dual) does not improve the performance. 
For both experiments we use as a baseline a fully connected model with $L=6$ layers and width $\tilde{n}=200$ for each layer. The width of a layer for each of the $m$ models in the ensemble is $n$. Each ensemble was trained on MNIST for 70 epochs with the Adam optimizer, and the accuracy was averaged over 100 repetitions.

\begin{figure*}[!h]
\centering
  \subcaptionbox{Primal\label{fig:mnist_primal_var}}{\includegraphics[width=.24\textwidth]{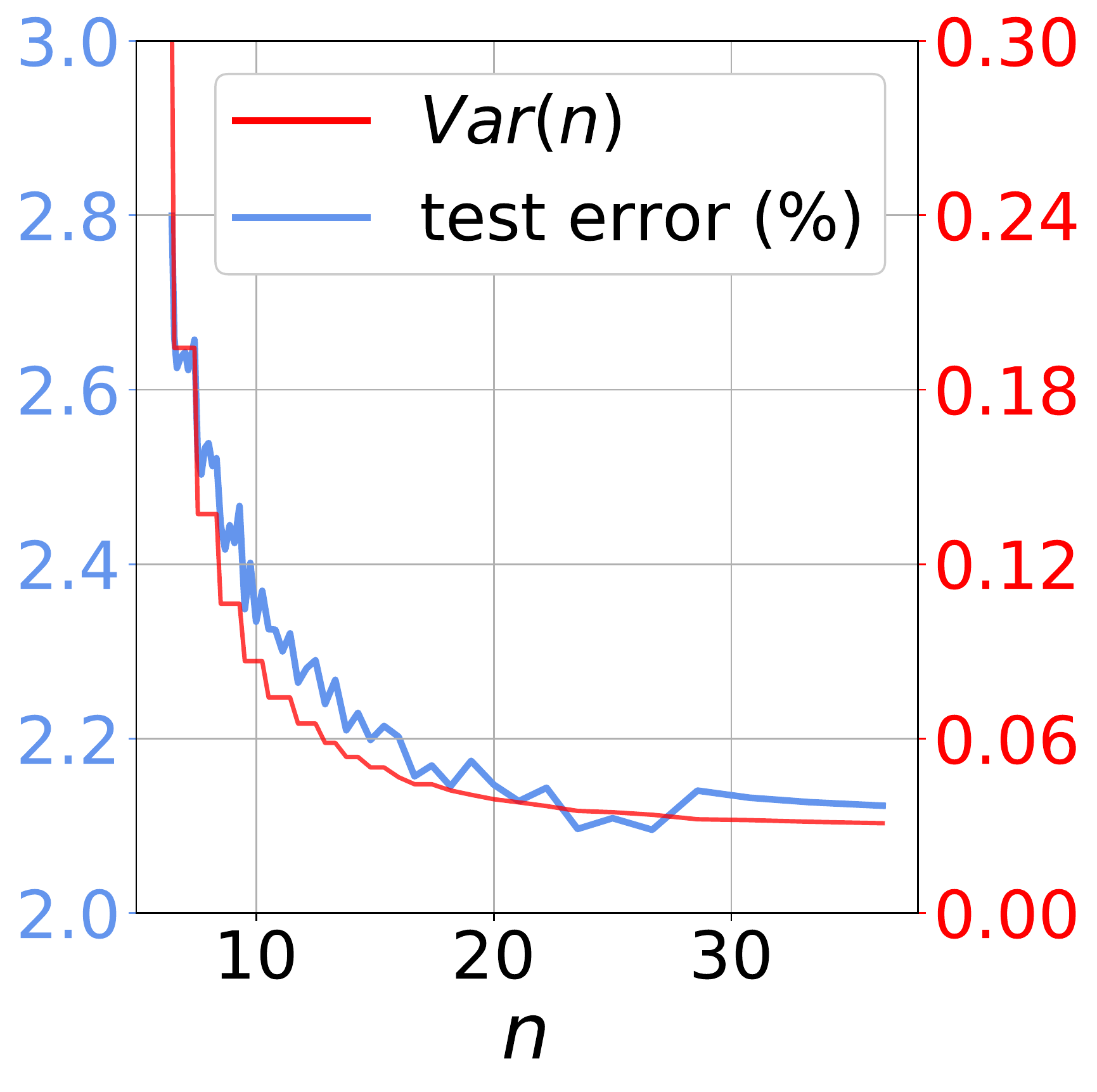}}\hfill 
  \subcaptionbox{Dual\label{fig:mnist_dual_var}}{\includegraphics[width=.24\textwidth]{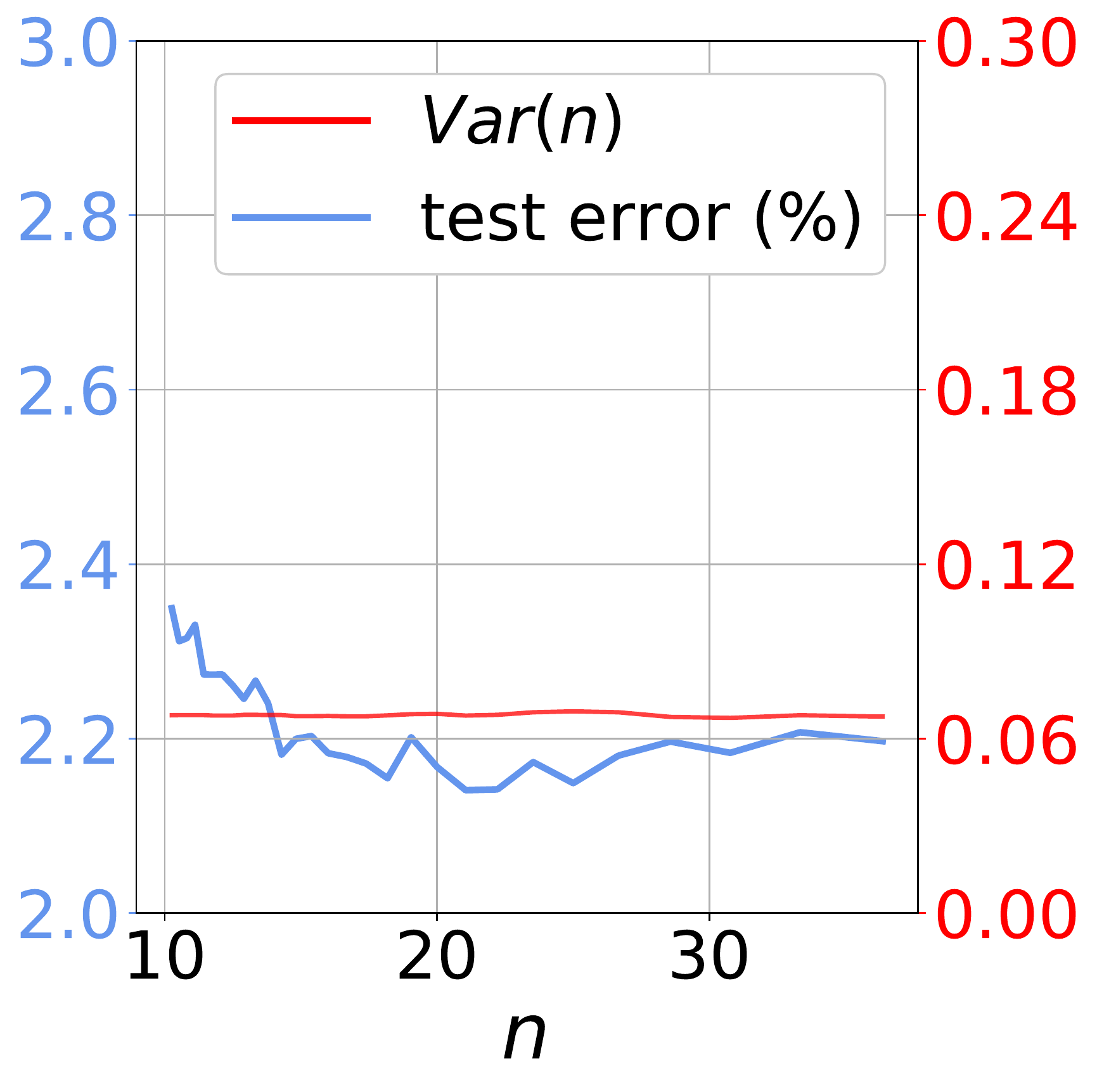}}
  \subcaptionbox{Primal\label{fig:mnist_primal_beta}}{\includegraphics[width=.24\textwidth]{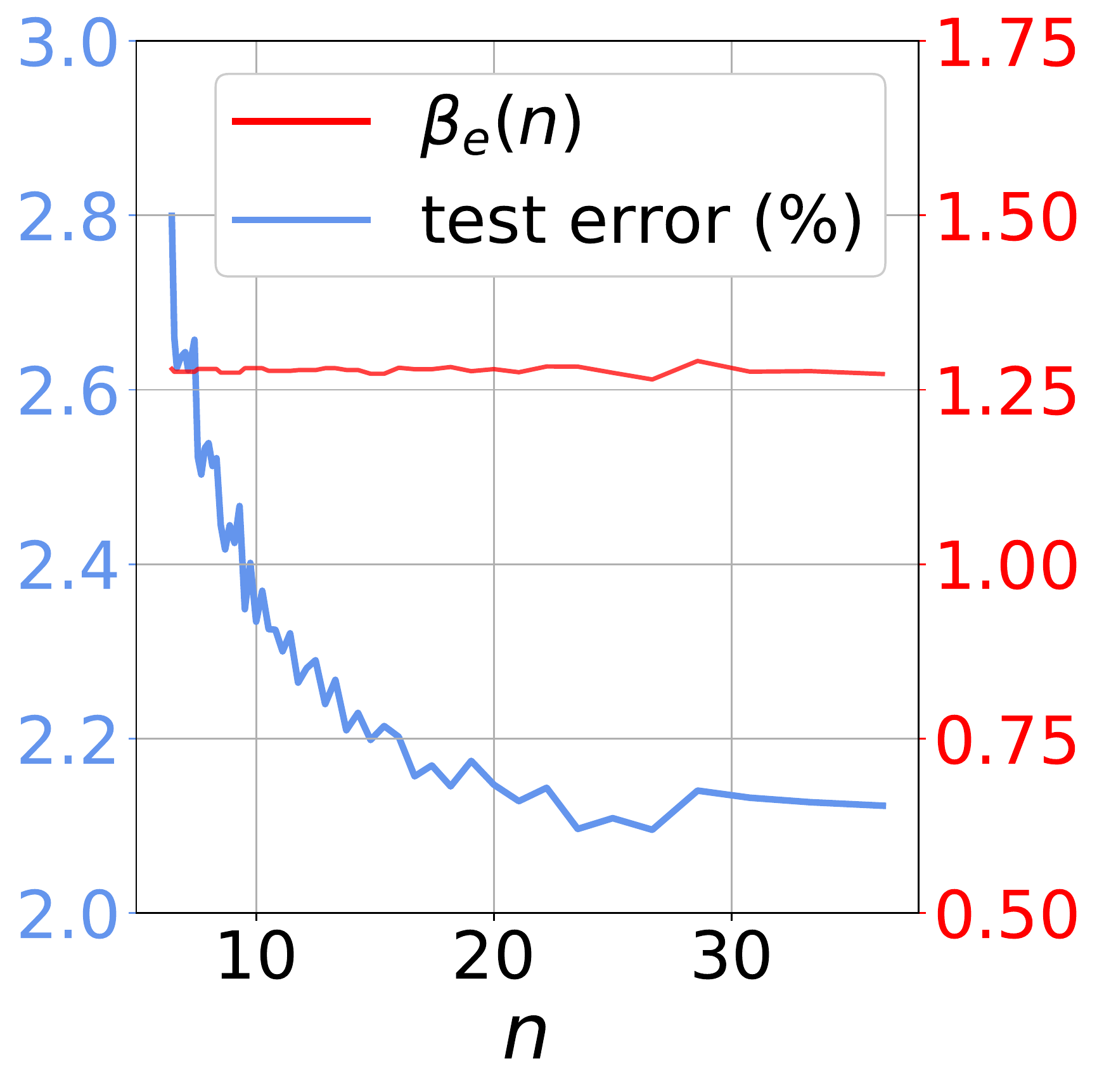}}\hfill 
  \subcaptionbox{Dual\label{fig:mnist_dual_beta}}{\includegraphics[width=.24\textwidth]{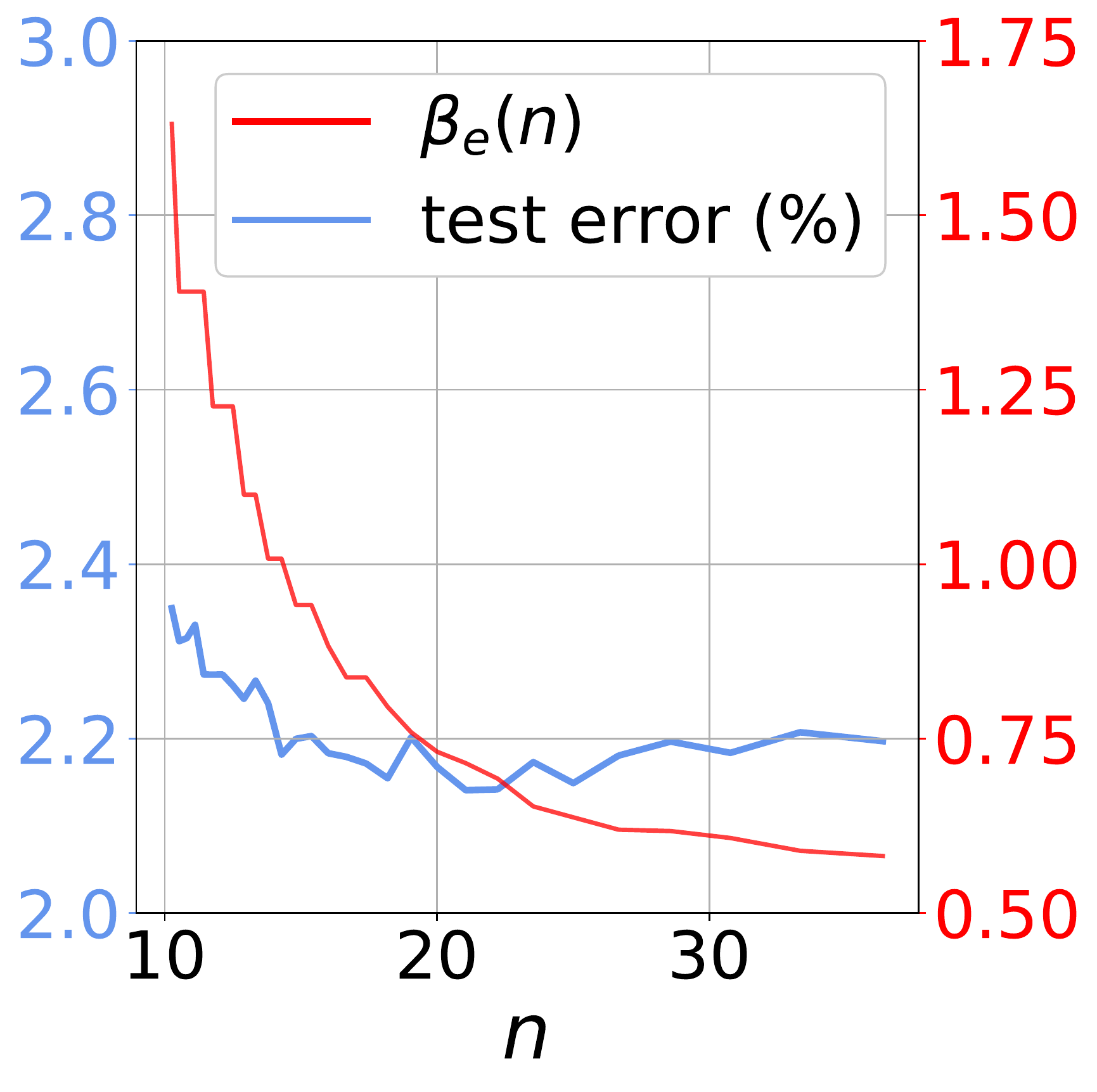}}\\ 
  \caption{\textbf{Decoupling capacity and variance}. The error (blue) is highly correlated with $Var(\mathcal{K})$, and less sensitive to $\beta_e$. (a) and (b) show the theoretical variance of the model correlates well with accuracy. (c) and (d) show the corresponding number of parameters $\beta_e$. Decreasing the variance (a) improves performance when $\beta_e$ is fixed (c). Increasing $\beta_e$ significantly (d) without reducing the variance (b) can cause degradation in performance due to overfitting.
  } 
  \label{fig:primaldualthm}
\end{figure*}
\subsection{Aggregated Residual Transformations}\label{sec:resnext_exps}

ResNeXt~\cite{resnext} introduces the concept of aggregated residual transformations utilizing group convolutions, which achieves better parameter/capacity trade off than its ResNet counterpart. In this paper, we view ResNeXt as a special case of CE, where the ensembling happens at the block level. We hypothesize that the optimal blocks identified with CE will lead to an overall better model when stacked up, and by doing so we get the benefit of factorizing the design space of a network to modular levels.
See Algorithm~\ref{alg:find_optimum} for the detailed recipe.

\begin{wrapfigure}[16]{r}{0.18\textwidth}
\centering
\includestandalone{figures/residual_block}  \caption{}
  \label{fig:resnet_block}\end{wrapfigure}
  
For these experiments, we use both the \textbf{CIFAR-10/100} and the \textbf{ImageNet} datasets following the implementation details described in~\cite{resnext}. We also report results on \textbf{ImageNet}$\boldsymbol{64{\times}64}$ and \textbf{ImageNet}$\boldsymbol{32{\times}32}$, datasets introduced in~\cite{small-imagenet} that preserve the number of samples and classes of the original ImageNet-1K~\cite{imagenet}, but downsample the image resolutions to $64{\times}64$ and $32{\times}32$ respectively (see Appendix~\ref{sec:imagenet_64x64}). 

\textbf{Fitting $\alpha$ to a ResNet block.}~The first step in the optimization required for both the primal and dual objectives, is to approximate the $\alpha$ parameter in Eq.~\ref{eq:var0}. For convolutional layers, the coefficient multiplying $\alpha$ becomes $\sum_{l=1}^L\text{fan-in}_l^{-1}$ where $\text{fan-in}_l$ is the fan-in of layer $l$. Following Algorithm~\ref{alg:fit_alpha}, we approximate the $\alpha$ corresponding to a ResNet block parametrized by $\bold{n}=[n, n]^\top$ as depicted in Fig.~\ref{fig:resnet_block}. We compute a Monte Carlo estimate of the second moment of one diagonal entry of the NTK matrix for increasing width $n\in\llbracket1,256\rrbracket$ and fixed $\text{n}_{\text{in}}{=}\text{n}_{\text{out}}{=}256$. For simplicity, we fit the second moment normalized by the squared first moment, given by $e^{\alpha\sum_{l=1}^L\text{fan-in}_l^{-1}}$, which can easily be fitted with a first degree polynomial when considering its natural logarithm. We find $\alpha\approx 1.60$ and show the fitted second moment in Appendix Fig.~\ref{fig:alpha_fitting_resnet_block}.

\subsubsection{CIFAR-10/100}\label{cifar-exps}

\textbf{Primal formulation.}~As a baseline architecture, we use a $1{\times}128\text{d}$ ResNet, following the notations of~\cite{resnext} section 5.1. Following Algorithm~\ref{alg:find_optimum}, we compute $m(n)$ for $n\in \llbracket 1, 128 \rrbracket$ and find the optimum $n^p=10$ and $m^p\approx 37$, after adjusting $m^p$ to match the number of parameters of the baseline and account for rounding errors and different block topology approximations. As can be seen in Table~\ref{table:primal_resnext}, the model achieving the primal optimum, $37{\times}10\text{d}$, attains better test error on CIFAR-10/100 than the ResNeXt baseline $3{\times}64\text{d}$ at a similar parameter budget. We also report results for a wider baseline $8{\times}64\text{d}$ from~\cite{resnext} and show similar trends. The test error for multiple models sitting on the primal curve is depicted in Fig.~\ref{fig:resnext_primal_cifar100} for CIFAR-100 and Appendix Fig.~\ref{fig:resnext_primal_cifar10} for CIFAR-10. Test errors are averaged over the last 10 epochs over 10 runs.

\begin{figure}[!b]

\begin{minipage}[t]{0.485\textwidth}
\vspace{0pt}
\begin{algorithm}[H]
\KwInput{Baseline module with $\bold{n}{=}\{n_l\}_{l=1}^L$, a set of width ratios $\{r_j\}_{j=1}^R$, $T$, samples $\{x_1, x_2\}$.}
\KwOutput{Fitted $\alpha$.}
\For{$j=1,...,R$}
{
Construct module $\bold{n}_j=\{r_j\times{n_l}\}_{l=1}^L$.

\For{$t=1,...,T$}{
Sample weights of $\bold{n}_j$.

Compute $\mathcal{K}_{\bold{\tilde{n}_j}}(x_1,x_2)$.
}

Estimate $Var\big(\mathcal{K}_{\bold{\tilde{n}}}\big)$.
}

Fit $\alpha$ using Eq.~\ref{eq:var0}.

\caption{Fitting $\alpha$ per architecture}
\label{alg:fit_alpha}
\end{algorithm}

\end{minipage}
\hfill
\begin{minipage}[t]{0.491\textwidth}
\vspace{0pt}
\begin{algorithm}[H]
\KwInput{Baseline module with $\bold{\tilde{n}}{=}\{n_l\}_{l=1}^L$, a set of width ratios $\{r_j\}_{j=1}^R$, $T$, samples $\{x_1, x_2\}$, $\beta_s$.}
\KwOutput{optimal $\bold{n}^\star, m(\bold{n}^\star)$.}
Fit $\alpha$ using Algorithm~\ref{alg:fit_alpha}.

\uIf{Primal}{ find $\bold{n}=\bold{n}^p$ and $m=m^p$ using Eq.~\ref{primal}.}

\ElseIf{Dual}{find $\bold{n}=\bold{n}^d$ and $m=m^d$ using Eq.~\ref{dual}.}

Correct $\bold{n}$ and $m$ to nearest integer values.
 \caption{Find Optimal CE module}
 \label{alg:find_optimum}
\end{algorithm}

\end{minipage}
\end{figure}

\begin{figure*}[ht!]
\centering
  \subcaptionbox{Primal\label{fig:resnext_primal_cifar100}}{\includegraphics[height=1.85in]{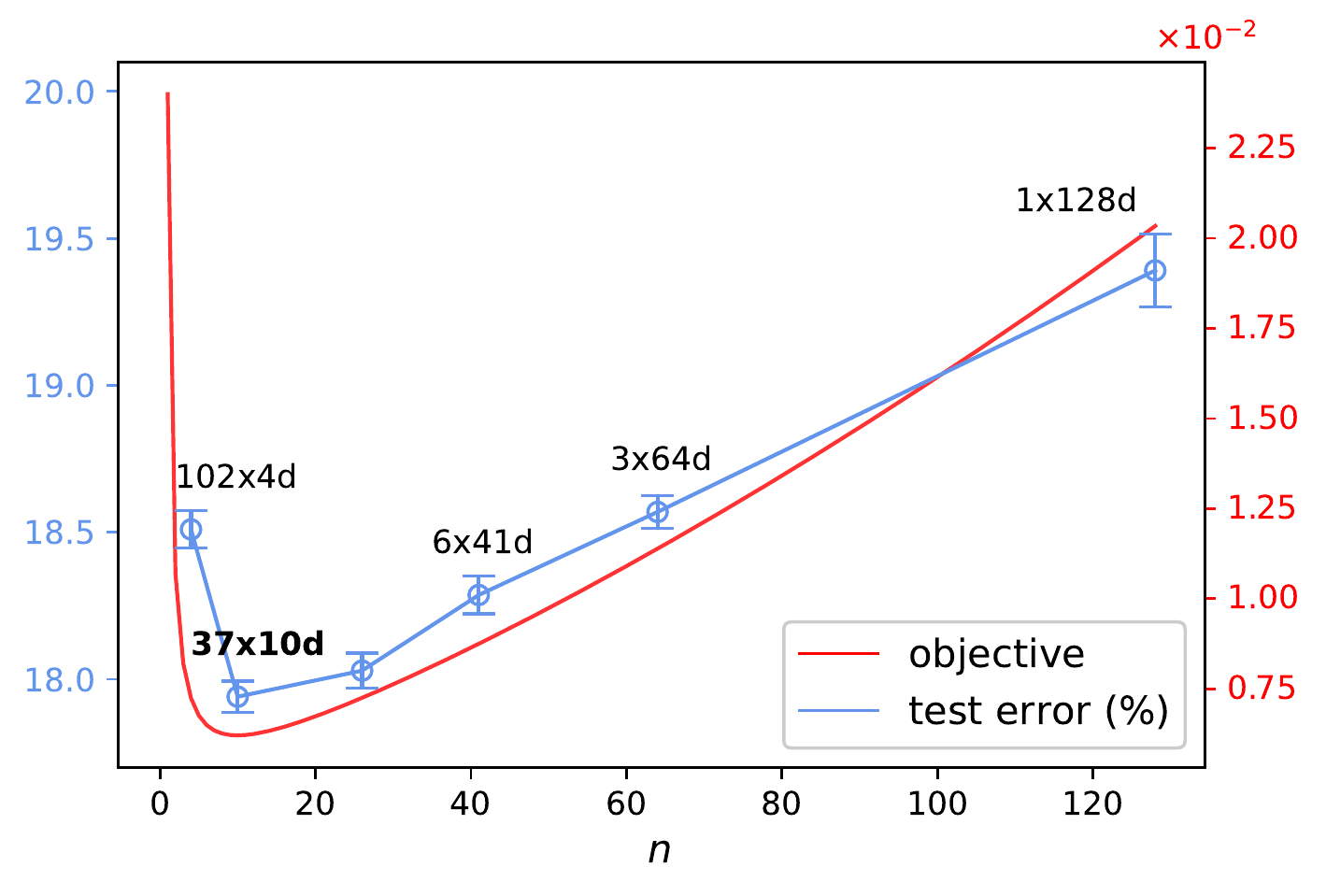}}\hfill 
  \subcaptionbox{Dual\label{fig:resnext_dual_cifar100}}{\includegraphics[height=1.77in]{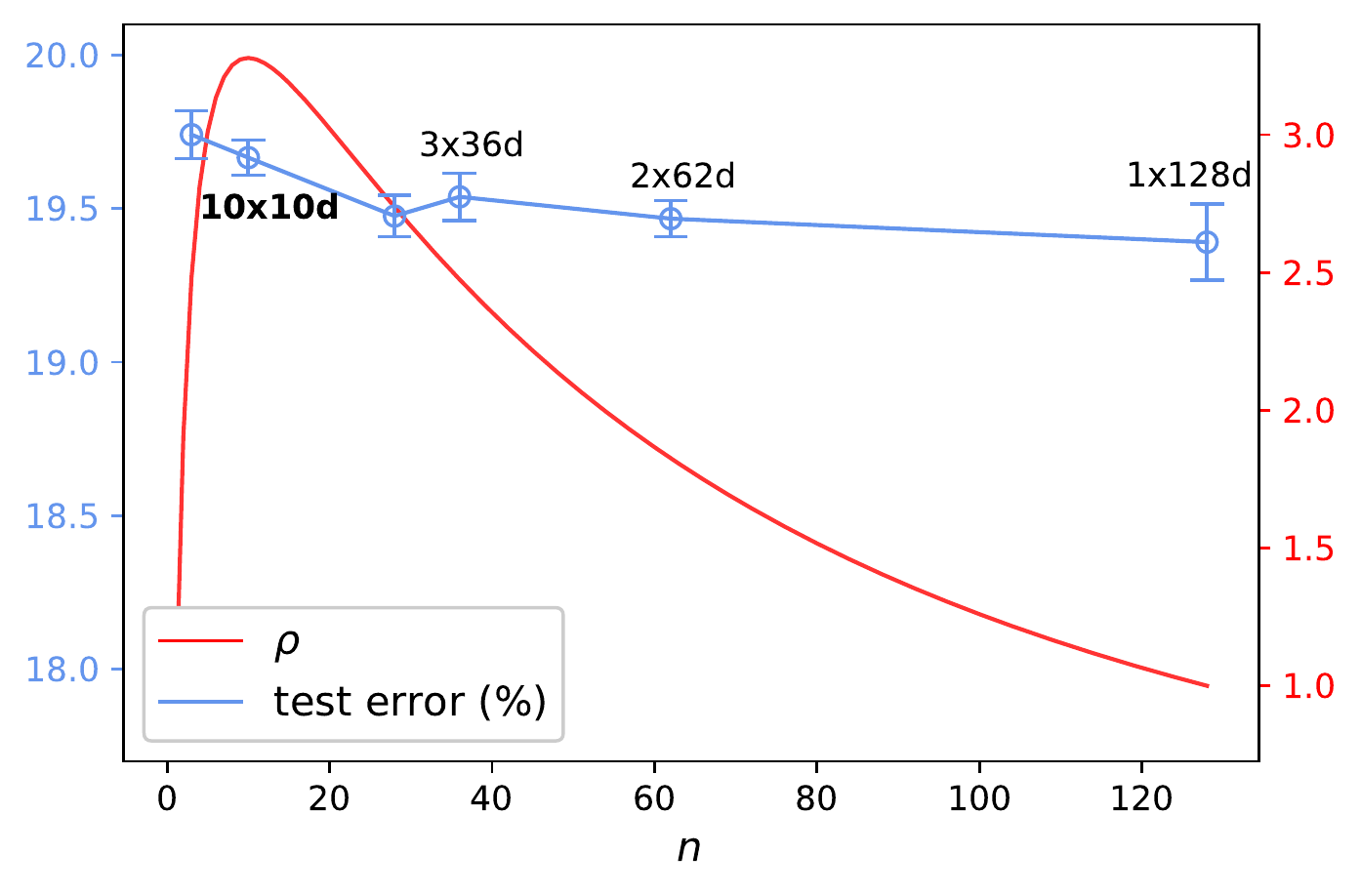}}\\
  \caption{\textbf{Test errors (in \%) on CIFAR-100.} Clear correlation is observed between test error and the primal curve. (a) Different models sitting on the primal curve. The model achieving the primal minimum, $37{\times}10\text{d}$, achieves the best error. (b) Different models sitting on the dual curve. The model $10{\times}10\text{d}$ achieves the dual maximum ($\rho\approx 3.3$) and a test error comparable to the baseline with $3.3$ times fewer parameters. Results are reported over 10 runs and shown with standard error bars.}
\end{figure*}

\textbf{Dual formulation.}~Using the same ResNet base block as for the primal experiment, thus using the same fitted $\alpha$, we compute the optimal $n^d$ and $m^d$ maximizing the parameter efficiency curve $\rho$ and find the same $n$ as the primal, $n^d=10$, and $m^d\approx 10$. The resulting ResNeXt network has $3.3$ times fewer parameters than the baseline and achieves similar or slightly degraded performance on CIFAR-10/100 as shown in Table~\ref{table:dual_resnext}. The efficiency curve $\rho$ depicted in red in Fig.~\ref{fig:resnext_dual_cifar100} is constructed using a single ResNet block topology and with non integer numbers for $m$ as described above. Thus it only approximates the \emph{real} parameter efficiency, explaining why some models in the close vicinity of the optimum have a slightly higher real efficiency as can be seen in Table~\ref{table:dual_resnext}. The test error for multiple models sitting on the dual curve is depicted in Fig.~\ref{fig:resnext_dual_cifar100} for CIFAR-100 and Appendix Fig.~\ref{fig:resnext_dual_cifar10} for CIFAR-10.

\subsubsection{ImageNet}\label{appendix:imagenet}

\textbf{Primal formulation.}~Following~\cite{resnext}, we use ResNet-50 and ResNet-101 as baselines and report results in Table~\ref{tab:imagenet_primal}. Our ResNet-50 based optimal model, $12{\times}10\text{d}$, obtains slightly better top-1 and top-5 errors than the baseline $32{\times}4\text{d}$ reported in~\cite{resnext}. This is quite remarkable given that~\cite{resnext} converged to this architecture via an expensive grid search. Our ResNet-101 based optimal model achieves a significantly better top-1 and top-5 error than the ResNet-101 baseline, and a slightly higher top-1 and top-5 error than the ResNeXt baseline $32{\times}4\text{d}$.

\textbf{Dual formulation.} Using ResNet-50 and ResNet-101 as baselines, we find models that achieve similar top-1 and top-5 errors with significantly less parameters. Detailed results can be found in Table~\ref{tab:imagenet_dual}.

\begin{table}[!h]
\centering
\vspace{0pt}
\renewcommand{\arraystretch}{1.1}
\captionsetup{aboveskip=0pt}
\begin{subtable}[t]{.46\linewidth}
\centering
\vspace{1pt}
\begin{tabular}{l|p{.8cm}p{.6cm}p{.6cm}}
\hline
 Model & Params& {C10} & {C100} \\ \hline
$1{\times}128\text{d}$~\cite{resnext}\textsuperscript{\textdagger}  & 13.8M &  4.08    & 19.39           \\
$3{\times}64\text{d}$~\cite{resnext}\textsuperscript{\textdagger} & 13.3M &   3.96   & 18.57       \\
$37{\times}10\text{d}$ (Ours) & 13.7M & 3.82 & \textbf{17.94}\\
$28{\times}12\text{d}$ (Ours)\textsuperscript{\ddag}  & 12.9M & \textbf{3.74} & 18.05\\
\hline
Wide ResNet~\cite{wideresnet}  & 36.5M &  4.17    & 20.50           \\
$1{\times}226\text{d}$ & 36.3M &   3.88    & 18.36 \\

$8{\times}64\text{d}$~\cite{resnext} & 34.4M &  \textbf{3.65}    & 17.77 \\
$101{\times}10\text{d}$ (Ours) & 36.3M &   \textbf{3.65}   & \textbf{17.44}       \\
\hline
\end{tabular}
\caption{Primal}
\label{table:primal_resnext}
\end{subtable}
\hspace{-.1cm}
\begin{subtable}[t]{.5\textwidth}
\centering
\vspace{0pt}
\flushright
\begin{tabular}{l|cp{1cm}p{.6cm}p{.6cm}}

\hline
Model & $\rho$ & Params & {C10} & {C100} \\ \hline
$1{\times}128\text{d}$~\cite{resnext}\textsuperscript{\textdagger}\textsuperscript{\textsection}  & 1 & 13.8M &  4.75   & 20.74\\
$10{\times}10\text{d}$ (Ours)\textsuperscript{\textsection} & \textbf{3.3} & \textbf{4.22M} & 4.70 & 20.84\\
\hline
$1{\times}128\text{d}$~\cite{resnext}\textsuperscript{\textdagger}  & 1 &  13.8M &  4.08   & 19.39       \\
$10{\times}10\text{d}$ (Ours) & 3.3 & 4.22M & 4.21 & 19.67\\ 
$8{\times}12\text{d}$ (Ours)\textsuperscript{\ddag}  & \textbf{3.3} & \textbf{4.19M} & 4.20 & 19.58\\ 
\hline
Wide ResNet~\cite{wideresnet} & 1 & 36.5M &  4.17 & 20.50           \\
$1{\times}226\text{d}$ & 1 & 36.3M & 3.88 & 18.36 \\
$2{\times}64\text{d}$~\cite{resnext} &  4.0 & 9.13M  & 4.02  & ~~~- \\ 
$14{\times}10\text{d}$ (Ours) &  \textbf{6.5} & \textbf{5.63M} & 4.01  & 19.04 \\ 
$6{\times}25\text{d}$ (Ours) &  5.0 & 7.26M & 3.94  & 18.92 \\
$3{\times}58\text{d}$ (Ours) &  3.1 & 11.6M & 3.87  & 18.75 \\
$2{\times}98\text{d}$ (Ours) &  2.1 & 17.4M & 3.99  & 18.32 \\

\hline
\end{tabular}
\caption{Dual}
\label{table:dual_resnext}
\end{subtable}
  \caption{\textbf{Primal and dual results for ResNeXt-29 baselines on CIFAR-10/100}. Test errors (in \%) for CIFAR-10 (C10) and CIFAR-100 (C100) along with model sizes are reported. All models are variants of ResNeXt-29 except for Wide ResNet. (a) The optimally smooth models, $37{\times}10\text{d}$ and $101{\times}10\text{d}$, surpass the baselines with the same number of parameters. (b) The optimally compact models only use a fraction of the parameters, yet attain similar or slightly degraded test errors. $\rho$ indicates the parameter efficiency. \textsuperscript{\dag} indicates we reproduced results on baseline architectures from the cited paper, \textsuperscript{\ddag} indicates models in the close vicinity of the optima.  Results are averaged over 10 runs. Models were trained on 8 GPUs unless indicated by \textsuperscript{\textsection}, in which case they were trained on a single GPU.}
\end{table}

\begin{table}[h!]

\centering
\vspace{0pt}
\renewcommand{\arraystretch}{1.1}
\captionsetup{aboveskip=0pt}

\begin{subtable}{\linewidth}
\centering
\begin{tabular}{l|cccc}
\hline
       & Params & Top-1 error  & Top-5 error\\ \hline
ResNet-50, $1{\times}64\text{d}$~\cite{resnext}\textsuperscript{\textdagger}  &  25.6M &  23.93   & 7.11 \\
ResNeXt-50, $32{\times}4\text{d}$~\cite{resnext}\textsuperscript{\textdagger}  &  25.0M &  22.42   & 6.36 \\
ResNeXt-50, $12{\times}10\text{d}$ (Ours)  &  25.8M &  \textbf{22.37}   & \textbf{6.30} \\
ResNeXt-50, $15{\times}8\text{d}$ (Ours)\textsuperscript{\ddag}  &  25.1M &  22.39   & 6.38 \\

\hline
ResNet-101, $1{\times}64\text{d}$~\cite{resnext}\textsuperscript{\textdagger}  & 44.5M &  22.32   & 6.25 \\
ResNeXt-101, $32{\times}4\text{d}$~\cite{resnext}\textsuperscript{\textdagger}  & 44.2M &  \textbf{21.01}   & \textbf{5.72} \\
ResNeXt-101, $12{\times}10\text{d}$ (Ours)  &  45.5M &  21.16   & 5.74 \\
ResNeXt-101, $15{\times}8\text{d}$ (Ours)\textsuperscript{\ddag}  &  44.2M &  21.20   & 5.76 \\

\hline
\end{tabular}
\caption{Primal}
\label{tab:imagenet_primal}
\end{subtable}

\begin{subtable}{\linewidth}
\centering
\begin{tabular}{l|cccc}
\hline
      & $\rho$ & Params & Top-1 error  & Top-5 error\\ \hline
ResNet-50, $1{\times}64\text{d}$~\cite{resnext}\textsuperscript{\textdagger}  & 1 &  25.6M &  23.93   & \textbf{7.11} \\

ResNeXt-50, $3{\times}23\text{d}$ (Ours)  & 1.3 &  19.4M &  \textbf{23.80} &  \textbf{7.11} \\
ResNeXt-50, $4{\times}16\text{d}$ (Ours)  & \textbf{1.5} &  \textbf{17.1M} &  24.00 &  \textbf{7.11} \\
\hline
ResNet-101, $1{\times}64\text{d}$~\cite{resnext}\textsuperscript{\textdagger}  & 1 &  44.5M &  22.32   & 6.25 \\
ResNeXt-101, $3{\times}23\text{d}$ (Ours)  & 1.4 &  32.9M &  \textbf{22.06}   & \textbf{6.11} \\
ResNeXt-101, $5{\times}12\text{d}$ (Ours)  & \textbf{1.7} &  \textbf{25.8M} &  22.30   & 6.27 \\
\hline

\end{tabular}
\caption{Dual}
\label{tab:imagenet_dual}
\end{subtable}

\caption{\textbf{Primal and dual results for ResNet baselines on ImageNet.} Top-1 and top-5 errors (in \%) and model sizes are reported. $\rho$ indicates the parameter efficiency, \textsuperscript{\dag} indicates we reproduced results on baseline architectures from the cited paper, \textsuperscript{\ddag} indicates models in the close vicinity of the optimum. Results are averaged over 5 runs.}
\label{table:imagenet}
\end{table}

\textbf{Implementation details.}~We follow~\cite{resnext} for the implementation details of ResNet-50, ResNet-101 and their ResNeXt counterparts. We use SGD with $0.9$ momentum and a batch size of 256 on $8$ GPUs ($32$ per GPU). The weight decay is $0.0001$ and the initial learning rate $0.1$. We train the models for $100$ epochs and divide the learning rate by a factor of $10$ at epoch $30$, $60$ and $90$. We use the same data normalization and augmentations as in~\cite{resnext} except for lighting that we do not use.





\section{Related Work}
Various forms of multi-pathway neural architectures have surfaced over the years. In the seminal AlexNet architecture \cite{alexnet}, group convolutions were used as a method to distribute training on multiple GPUs. More recently, group convolutions were popularized by ResNeXt \cite{resnext}, empirically demonstrating the benefit of aggregating multiple residual branches. In \cite{shufflenet}, a channel shuffle unit was introduced in order to promote information transfer between different groups. In \cite{condensenet} and \cite {other}, the connections between pre-defined set of groups are learned in a differentiable manner, and in \cite{prune}, grouping is achieved through pruning of full convolutional blocks. In a seemingly unrelated front, the theoretical study  of wide neural networks has seen considerable progress recently. A number of papers \cite{wide,yang,exact,guy,enhanced} have followed in the footsteps of the original NTK paper \cite{NTK}. In \cite{wide}, it is shown that wide models of any architecture evolve as linear models, and in \cite{yang}, a general framework for computing the NTK of a broad family of architectures is proposed. Finite width corrections to the NTK are derived in \cite{boris1,me,guy}. In this work, we extend the "wide" regime to the multiplicity dimension, and show two distinct regimes where different kernels reign. We then use finite corrections of NTK to formulate two optimality criterions, and demonstrate their usefulness in predicting efficient and performant ensembles.\\ 
\section{Conclusion}
Understanding the effects of model architecture on training and test performance is a longstanding goal in the deep learning community. In this work we analyzed collegial ensembling, a general technique used in practice where multiple and functionally identical pathways are aggregated. We showed that collegial ensembles exhibit two distinct regimes of over-parameterization, defined by large width and large multiplicity, with two distinct kernels governing the dynamics of each. In between these two regimes, we introduced a framework for deriving optimal ensembles in a sense of minimum capacity or maximum trainability. Empirical results on practical models demonstrate the predictive power of our framework, paving the way for more principled architecture search algorithms.

\bibliographystyle{plain}
\bibliography{refs}

\appendix
\newpage

\section*{Appendix}
\section{Results and Implementation Details: Downsampled ImageNet}\label{sec:imagenet_64x64}

\subsection{Experiments on ImageNet$\bold{32{\times}32}$}
\textbf{Implementation details.}~
We use the same ResNeXt-29 architectures from the CIFAR experiments. We use SGD with $0.9$ momentum and a batch size of $1024$ on $8$ GPUs ($128$ per GPU). The weight decay is $0.0005$ and the initial learning rate $0.08$. We train the models for $80$ epochs and divide the learning rate by a factor $5$ at epoch $20$, $40$ and $60$. We use the same data normalization and augmentations as in~\cite{small-imagenet}.

\textbf{Primal formulation.}~Using the same baseline architecture $1{\times}128\text{d}$ as for the CIFAR experiments, we train the model achieving the optimum, $37{\times}10\text{d}$, and report results in Table~\ref{table:imagenet}. Our optimal model achieves lower top-1 and top-5 errors than the baseline ResNeXt architecture $3{\times}64\text{d}$ derived in~\cite{resnext} at a similar parameter budget. We use the same augmentations and learning rate schedule as~\cite{small-imagenet}.

\textbf{Dual formulation.}~Using the same baseline $1{\times}128\text{d}$ and optimally compact architecture $10{\times}10\text{d}$ derived in section~\ref{cifar-exps}, we observe a similar trend: our optimal model suffers lighter top-1 and top-5 degradation than the Wide ResNet variant with a reduced parameter budget, with $2.8$ times fewer parameters than the baseline. Sampling models on the dual curve with lower $\rho$ such as $3{\times}{40}\text{d}$, we find models that suffer less than a percent drop in top-1 and top-5 error with a significantly lower parameter count.

\begin{table}[h]
\centering
\begin{tabular}{l|cccc}
\hline
      & $\rho$ & Params & Top-1 error  & Top-5 error\\ \hline
Wide ResNet 28-10~\cite{small-imagenet} & - & 37.1M &  40.96   & 18.87\\
ResNet-29, $1{\times}128\text{d}$\textsuperscript{\textdagger}  & 1 &  14.8M & 40.61   & 17.82\\
ResNeXt-29, $3{\times}64\text{d}$\textsuperscript{\textdagger} & 1 &  14.4M &  39.58   & 17.09\\

ResNeXt-29, $37{\times}10\text{d}$ (Ours) & 1 &  14.8M &  \textbf{38.41}   & \textbf{16.13}\\
\hline
Wide ResNet 28-5~\cite{small-imagenet} & 1.6 &  9.5M &  45.36  & 21.36\\
ResNeXt-29, $10{\times}10\text{d}$ (Ours) & \textbf{2.8} &  \textbf{5.2M} &  43.36   & 19.65\\
ResNeXt-29, $3{\times}40\text{d}$ (Ours) & 1.8 & 8.0M &  41.54   & 18.58\\

\hline
\end{tabular}
\caption{\textbf{Primal and dual results for ResNet baselines on ImageNet$\boldsymbol{32{\times}32}$}. Top-1 and top-5 errors (in \%) and model sizes are reported. The optimally smooth model, $37{\times}10\text{d}$, surpasses the baseline architectures from~\cite{resnext} (indicated with \textsuperscript{\textdagger}) with the same number of parameters. The optimally compact model, $10{\times}10\text{d}$, achieves slightly degraded results but with $2.8$ times fewer parameters. Results are averaged over 5 runs.}
\label{table:imagenet}
\end{table}

\subsection{Experiments on ImageNet$\bold{64{\times}64}$}
\textbf{Implementation details.}~In order to adapt the ResNeXt-29 architectures used for CIFAR-10/100 and ImageNet$32{\times}32$ to the resolution of ImageNet$64{\times}64$, we add an additional stack of three residual blocks following~\cite{small-imagenet}. Following the general parametrization of ResNeXt~\cite{resnext}, we multiply the width of this additional stack of blocks by $2$ and downsample the spatial maps by the same factor using a strided convolution in the first residual block. We use SGD with $0.9$ momentum and a batch size of $512$ on 8 GPUs ($64$ per GPU). The weight decay is $0.0005$ and the initial learning rate $0.04$. We train the models for $60$ epochs and divide the learning rate by a factor $5$ at epoch $20$, $30$ and $40$. We use the same data normalization and augmentations as~\cite{small-imagenet}.
\begin{table}[h!]
\centering
\begin{tabular}{l|cccc}
\hline
       & $\rho$ & Params & Top-1 error  & Top-5 error\\ \hline
Wide ResNet 36-5~\cite{small-imagenet} & - & 37.6M &  32.34   & 12.64\\
ResNet-38, $1{\times}96\text{d}$\textsuperscript{\textdagger}  & 1 &  37.5M &  29.36   & 10.10\\
ResNeXt-38, $2{\times}64\text{d}$\textsuperscript{\textdagger}  & 1 &  39.0M &  28.86   & 9.72\\
ResNeXt-38, $22{\times}10\text{d}$ (Ours)  & 1 &  36.3M &  \textbf{28.34}   & \textbf{9.38}\\
ResNeXt-38, $8{\times}10\text{d}$ (Ours)  & \textbf{2.3} &  \textbf{16.3M} &  30.93   & 11.02\\
\hline
ResNet-38, $1{\times}128\text{d}$\textsuperscript{\textdagger}  & 1 &  57.8M &  28.56   & 9.67\\
ResNeXt-38, $3{\times}64\text{d}$\textsuperscript{\textdagger}  & 1 &  56.1M &  28.01   & 9.28\\
ResNeXt-38, $37{\times}10\text{d}$ (Ours)  & 1 &  57.7M &  \textbf{27.24}   & \textbf{8.74}\\
ResNeXt-38, $10{\times}10\text{d}$ (Ours)  & \textbf{3.0} &  \textbf{19.1M} &  30.22   & 10.60\\
\hline

\end{tabular}
\caption{\textbf{Primal and dual results for ResNet baselines on ImageNet$\boldsymbol{64{\times}64}$}. Top-1 and top-5 errors (in \%) and model sizes are reported. The optimally smooth models, $22{\times}10\text{d}$ and $37{\times}10\text{d}$, surpass the baseline architectures from~\cite{resnext} (indicated with \textsuperscript{\textdagger}) with the same number of parameters. The optimally compact models, $8{\times}10\text{d}$ and $10{\times}10\text{d}$, achieve slightly degraded results but with significantly fewer parameters. Results are averaged over 5 runs.}
\label{table:imagenet64x64}
\end{table}

\newpage
\section{Results on CIFAR-10}

Results are shown in Fig.~\ref{fig:resnext_cifar10} and implementation details can be found in the main text in Sec.~\ref{cifar-exps}.
\begin{figure}[h!]
\centering
  \subcaptionbox{Primal\label{fig:resnext_primal_cifar10}}{\includegraphics[height=1.85in]{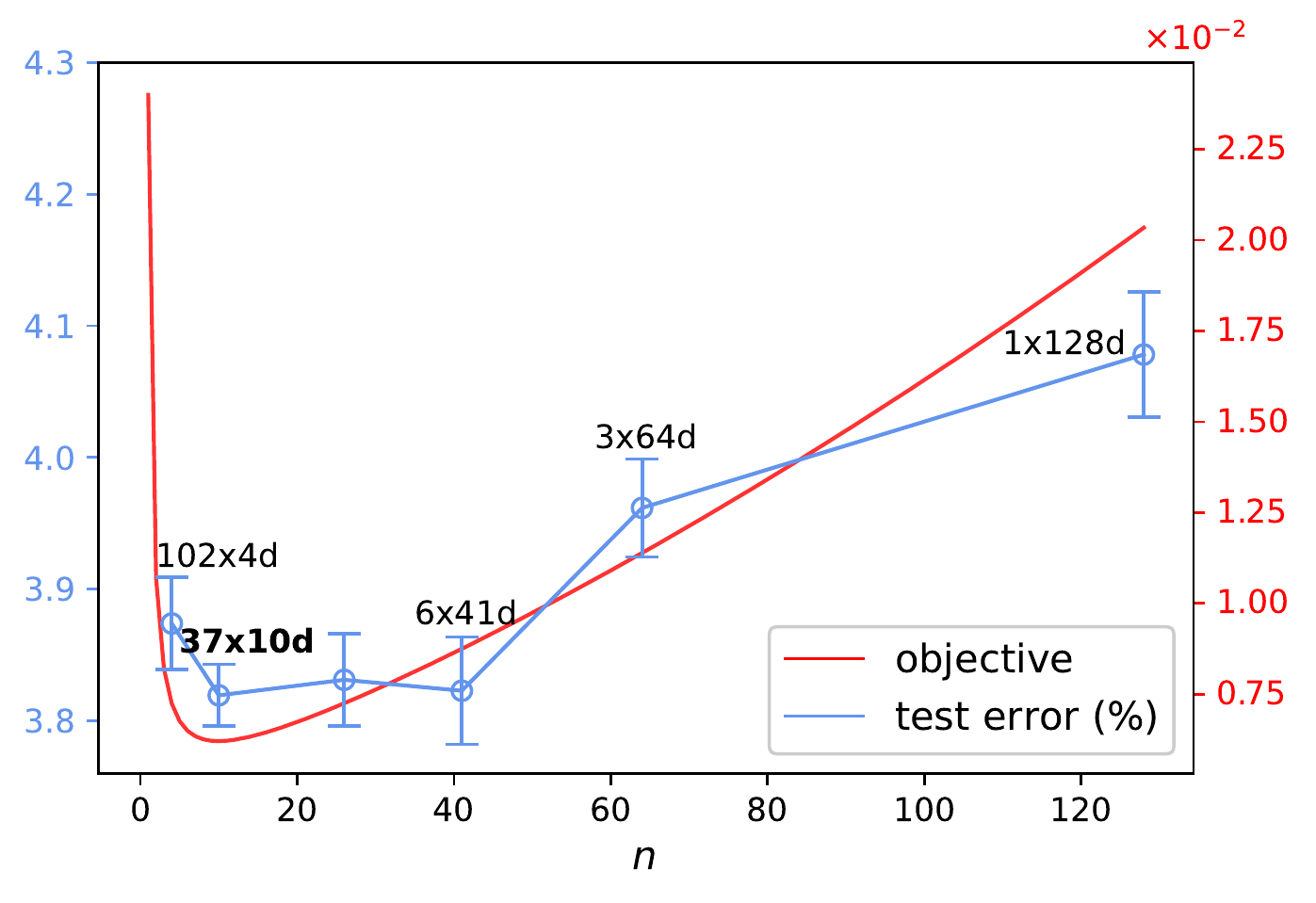}}\hfill 
  \subcaptionbox{Dual\label{fig:resnext_dual_cifar10}}{\includegraphics[height=1.79in]{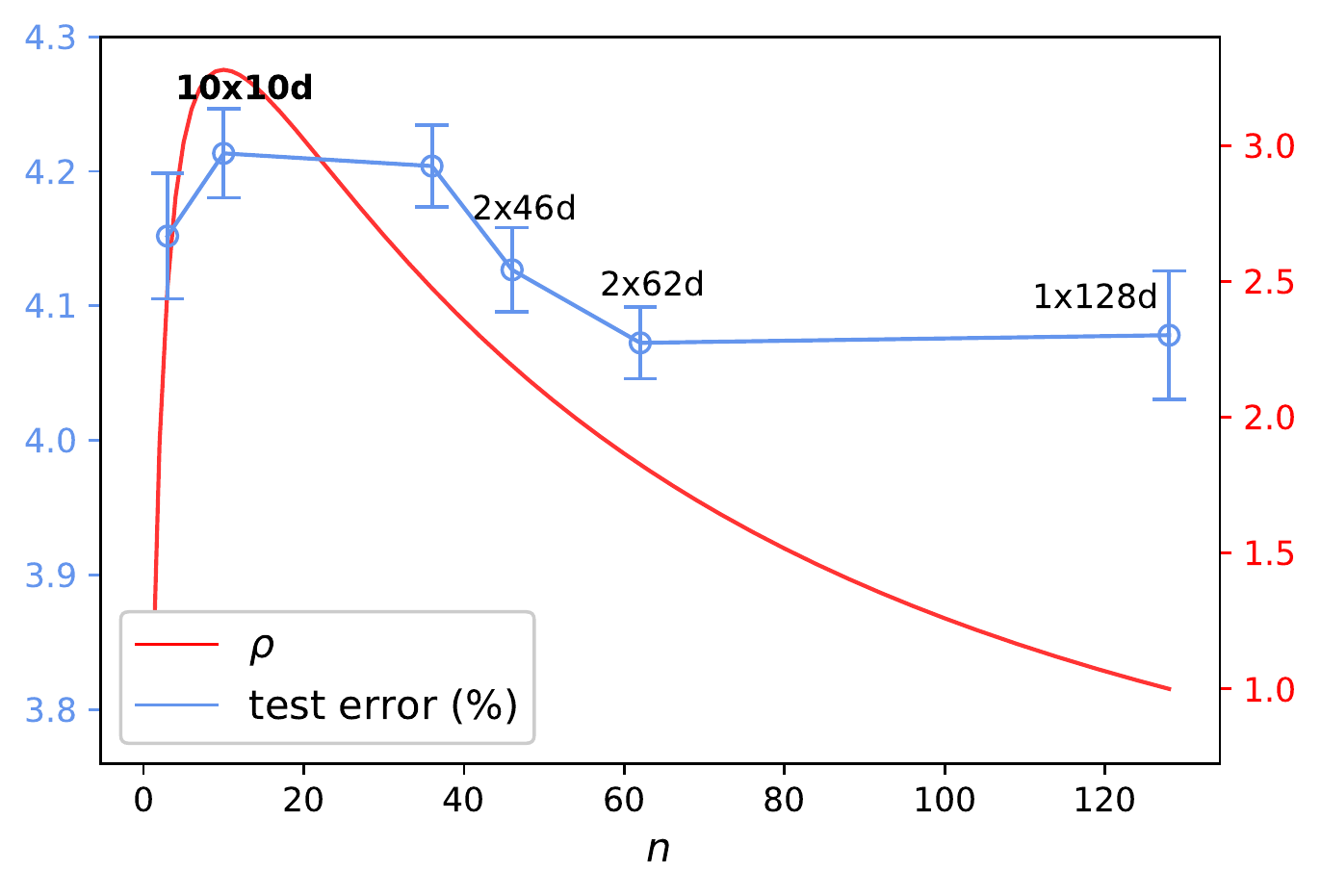}}\\
  \caption{\textbf{Test errors (in \%) on CIFAR-10.} Clear correlation is observed between test error and the primal curve. (a) Different models sitting on the primal curve. The model achieving the primal minimum, $37{\times}10\text{d}$, achieves the best error. (b) Different models sitting on the dual curve. The model $10{\times}10\text{d}$ achieves the dual maximum ($\rho\approx 3.3$) and a slightly higher test error than the baseline with $3.3$ times fewer parameters. Results are reported over 10 runs and shown with standard error bars.}
  \label{fig:resnext_cifar10}
\end{figure}

\section{FLOPs efficiency}\label{appendix:flops_efficiency}

In Sec.\ref{sec:efficient_ensembles} and the rest of the paper, we considered the parameter efficiency $\rho$ defined as the ratio between the number of parameters of the baseline model and the ensemble (see Eq.~\ref{eq:rho}). Using this definition of efficiency, models satisfying the primal objective were models with similar number of parameters. Instead of using the parameter efficiency, we can consider FLOPs efficiency in the same way:
\[
\rho^{\text{flop}}(m,\bold{n}) \triangleq \frac{\beta^{\text{flop}}_s}{\beta^{\text{flop}}_e(\bold{n})}=\frac{\beta^{\text{flop}}_s}{m\beta^{\text{flop}}(\bold{n})},
\]

where $\beta^{\text{flop}}_s$ and $\beta^{\text{flop}}_e$ are the number of FLOPs of the baseline model and the total number of FLOPs in the ensemble respectively. We report results for the primal formulation on CIFAR-10/100 in Table~\ref{table:cifar_flops_results}. We see that the model achieving the primal optimum, $44{\times}8\text{d}$, attains better test error on CIFAR-10 and CIFAR-100 with similar number of FLOPs.

\begin{table}[!h]

\centering
\begin{tabular}{l|cccc}
\hline
 Model & GFLOPs & Params& {C10} & {C100} \\ \hline
ResNet-29 $1{\times}128\text{d}$~\cite{resnext}\textsuperscript{\textdagger}  & 4.19 & 13.8M &  4.08    & 19.39           \\
ResNeXt-29 $3{\times}64\text{d}$~\cite{resnext}\textsuperscript{\textdagger} & 4.15 & 13.3M &   3.96   & 18.57       \\
ResNeXt-29 $44{\times}8\text{d}$ (Ours) & 4.20 & 12.7M & \textbf{3.66} & \textbf{17.86}\\
ResNeXt-29 $60{\times}6\text{d}$ (Ours)\textsuperscript{\ddag} & 4.17 & 12.6M & 3.73 & 18.04\\

\hline
\end{tabular}
\caption{\textbf{Results for ResNeXt-29 baselines on CIFAR-10/100 when keeping FLOPs constant instead of \# parameters}. Test errors (in \%) for CIFAR-10 (C10) and CIFAR-100 (C100) along with model GFLOPs and sizes are reported. The optimally smooth model, $44{\times}8\text{d}$, surpasses the baselines with the same number of FLOPs. \textsuperscript{\dag} indicates we reproduced results on baseline architectures from the cited paper, \textsuperscript{\ddag} indicates models in the close vicinity of the optimum. Results are averaged over 10 runs.}
\label{table:cifar_flops_results}
\end{table}

\section{Fitting $\alpha$ to a ResNet Block}
\begin{figure}[H]
\hspace*{-1.2cm}
\centering
\includegraphics[width=2.95in]{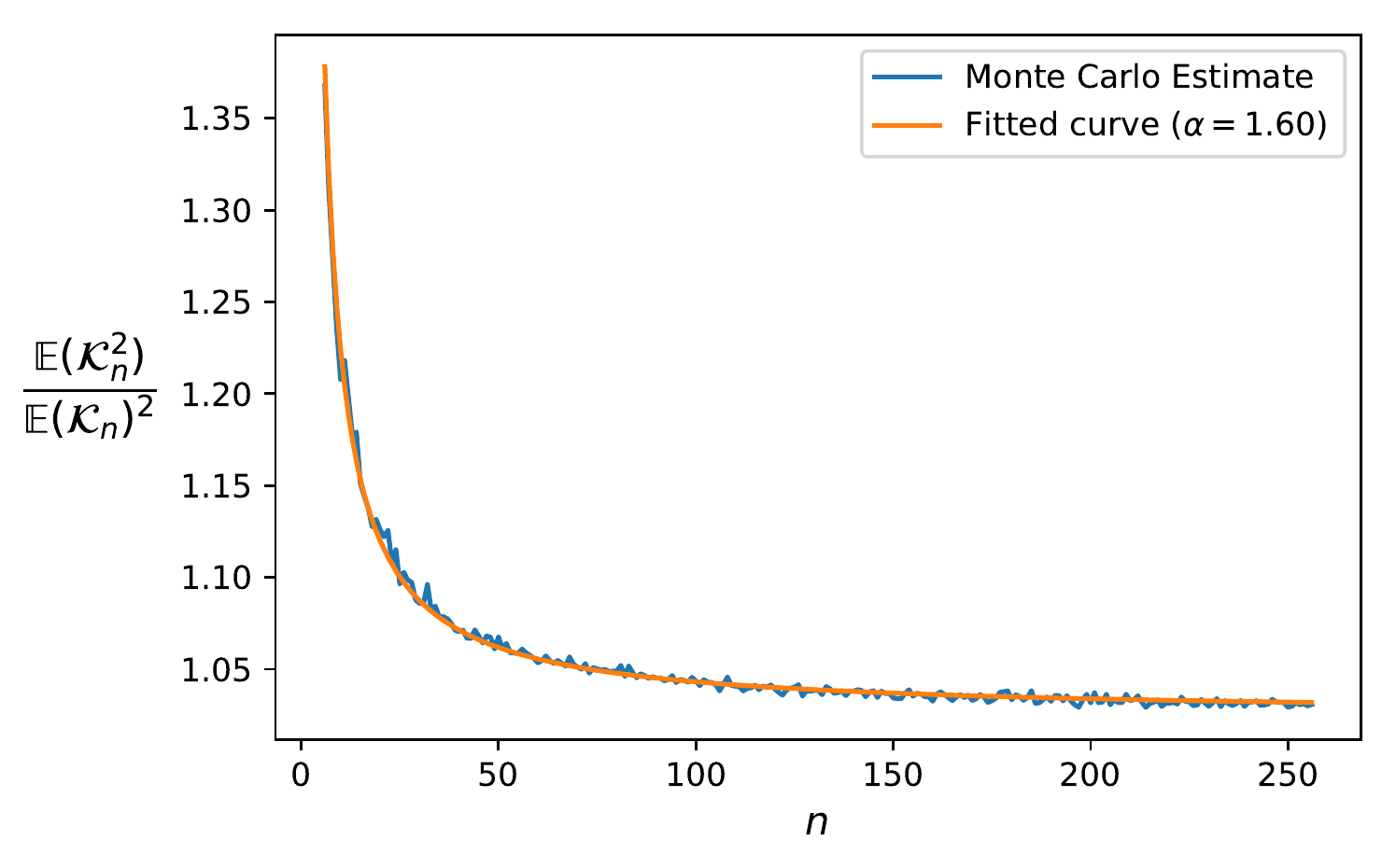}
  \caption{\textbf{Estimating the variance of a ResNet block and fitting} $\boldsymbol{\alpha}$. The Monte Carlo estimate is calculated over $2000$ trials and $\alpha$ is fitted following Algorithm~\ref{alg:fit_alpha} as described in Sec.~\ref{sec:resnext_exps}.}
\label{fig:alpha_fitting_resnet_block}
\end{figure}

\section{Figure Illustrating Theorem 1}

\begin{figure}[H]
\centering
  \includegraphics[height=2.0in]{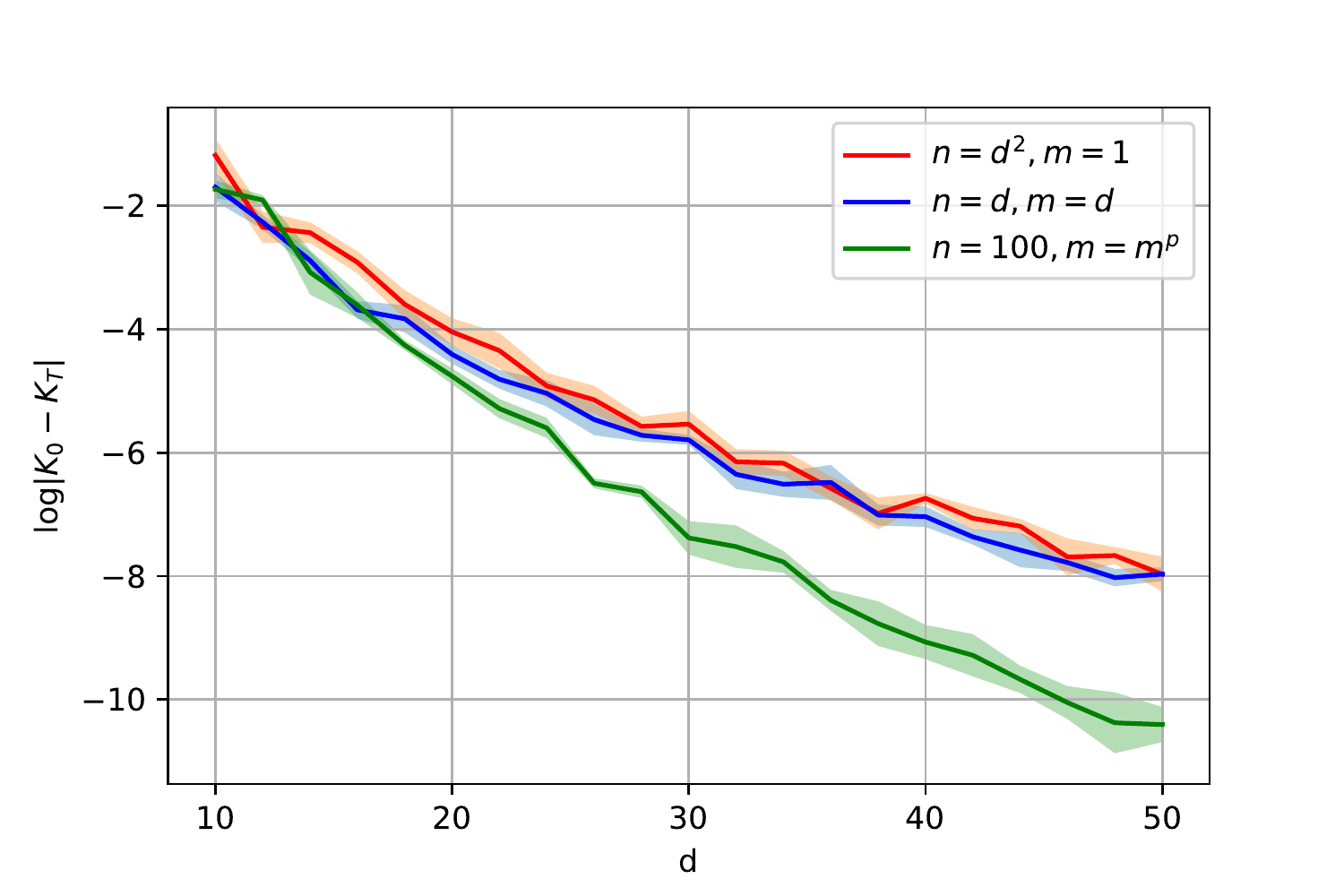}\hspace{1em}%
  \caption{Dynamics of the NTK during training as a function of width $n$ and multiplicity $m$ for (a) baseline single model (b) ensemble (c) ensemble with a constant width $n = 100$, and multiplicity $m$ to match the number of parameters in the baseline (red). The NTK was computed for a single off-diagonal entry for a depth $L = 4$ fully connected network trained on MNIST. The y axis corresponds to the absolute change in log scale between the NTK value at initialization, and after training for $T=100$ epochs. As predicted in Theorem~\ref{thm:evolution}, the baseline model with $m=1,n=d^2$ and the ensemble with $m = n = d$ have equal $mn$, therefore exhibit similar correction of the NTK. In (c), the change of the NTK becomes smaller than the baseline, as mn is considerably larger, although the total parameter count is the same as the baseline.}
  \label{fig:dynamics}
\end{figure}

\section{Proofs of Lemma 1 and Theorem 1}\label{proofs}
\ensemblea*
\begin{proof}
Recall that the NTK of the ensemble is given by the mean:
\[
\mathcal{K}^e(X;\Theta) = \frac{1}{m}\sum_{j=1}^m \mathcal{K}_{\bold{n}}(X;\theta_j). 
\]
Note that expectation of each member in the average is identical and finite under Lebesgue integration:
\[
\mathcal{K}_\bold{n}^\infty(X) = \E_\theta[\mathcal{K}_{\bold{n}}(X;\theta_j)].
\]
Since each member of the ensemble is sampled independently, we have from the strong law of large numbers (LLN):
\[
\frac{1}{m}\sum_{j=1}^m \mathcal{K}_{\bold{n}}(X;\theta_j) \stackrel{a.s}{\longrightarrow} \mathcal{K}_\bold{n}^\infty(X)~~~\textnormal{ as $m \to \infty$}.
\]
Proving the claim.
\end{proof}

\ensembleb*
\begin{proof}
In the following proof, we assume for the sake of clarity our training data contains a single example, so that $\mathcal{K}_n, \mathcal{K}^e,\mathcal{F}_n,\mathcal{F}^e \in \mathcal{R}$. The results however hold in the general case. Throughout the proof, we use $\Theta_t$ to denote the weights at time $t$, while $\Theta,\theta_{j}$ denote the weights at initialization.
    

For analytic activation functions, the time evolved kernel $\mathcal{K}^e(\Theta_t)$ is analytic with respect to $t$. Therefore, at any time $t$ we may approximate the kernel using Taylor expansion evaluated at $t=0$:
\[
\mathcal{K}^e(\Theta_t) - \mathcal{K}^e(\Theta) = \sum_{r=1}^\infty \frac{\partial ^r \mathcal{K}^e(\Theta) }{\partial t ^r}\frac{t^r}{r!}.
\]
Similarly to the technique used in \cite{wide}, we assume we may exchange the Taylor expansion with large width and multiplicity limits.  
We now analyze each term in the Taylor expansion separately. Using the ensemble NTK definition in Eq.~\ref{eq:ensemble_ntk} in the main text, the $r$-th order term of the ensemble NTK is given by:
\[\label{rth}
\frac{\partial ^r \mathcal{K}^e(\Theta) }{\partial t ^r} = \frac{1}{m}\sum_{j=1}^m \frac{\partial ^r \mathcal{K}_n(\theta_{j}) }{\partial t ^r} = \frac{1}{m}\sum_{j=1}^m\Big(\frac{\partial}{\partial t} \Big)^r\mathcal{K}_n(\theta_{j}).
\]
Next we derive the time derivative operator $\frac{\partial}{\partial t}$, under gradient flow and $L_2$ loss.
Given label $y$, we can denote the residual terms $\mathcal{R}_j = \big(\mathcal{F}_n(\theta_{j}) - \frac{y}{\sqrt{m}}\big) \in \mathcal{R}^N$ and the total model residual as $\mathcal{R}(\Theta) = \frac{1}{\sqrt{m}}\sum_{j=1}^m\mathcal{R}_j$, such that the $L_2$ cost given by $\mathcal{L} = \frac{1}{2}\big(\mathcal{R}(\Theta)\big)^2$.The model parameters in this case evolve according to:
\[
\dot{\Theta} = -\nabla_{\Theta}\mathcal{L} = -\Big(\frac{\partial \mathcal{F}^e(\Theta)}{\partial \Theta}\Big)\mathcal{R}(\Theta).
\]
The parameters of each model $j$ in the ensemble evolve according to:
\[\label{dot}
\dot{\theta}_{j} = -\nabla_{\theta_{j}}\mathcal{L} = -\Big(\frac{\partial \mathcal{F}^e(\Theta)}{\partial \theta_{j}}\Big) \mathcal{R}(\Theta) = -\frac{1}{\sqrt{m}}\Big(\frac{\partial \mathcal{F}_n(\theta_{j})}{\partial \theta_{j}}\Big) \mathcal{R}(\Theta).
\]
The time derivative operator $\frac{\partial}{\partial t}$ at $t=0$ can be expanded as follows:
\[\label{time_der}
\frac{\partial}{\partial t} = \Big\langle \dot{\Theta},\frac{\partial }{\partial \Theta}\Big\rangle = \sum_{j=1}^m \Big\langle\dot{\theta}_{j},\frac{\partial}{\partial \theta_{j}}\Big\rangle
\]
Plugging the definition of $\dot{\theta}_{j}$ in Eq.~\ref{dot} into Eq.~\ref{time_der} yields:
\[
\frac{\partial}{\partial t} = -\frac{1}{\sqrt{m}} \mathcal{R}(\Theta) \sum_{j=1}^m\Big\langle \frac{\partial \mathcal{F}_n(\theta_{j})}{\partial \theta_{j}}, \frac{\partial }{\partial \theta_{j}}\Big \rangle = -\frac{1}{\sqrt{m}} \mathcal{R}(\Theta) \sum_{j=1}^m\hat{\Gamma}_{j}
\]
where we have introduced the operator $\hat{\Gamma}_{j}= \langle\frac{\partial \mathcal{F}_n(\theta_{j})}{\partial \theta_{j}}, \frac{\partial }{\partial \theta_{j}}\rangle$.
For each model $j = j_0$ in the ensemble, the $r$-th time derivative of its corresponding NTK at $t=0$ is therefore given by:
\[\label{expand}
\Big(\frac{\partial}{\partial t} \Big)^r\mathcal{K}_n(\theta_{j_0}) &=& \left[-\frac{1}{\sqrt{m}} \mathcal{R}(\Theta) \Big(\sum_{j=1}^m\hat{\Gamma}_{j}\Big) \right]^r\mathcal{K}_n(\theta_{j_0})\\
&=& \left[-\frac{1}{m} \Big(\sum_{j_1,j=1}^m\mathcal{R}_{j_1} \hat{\Gamma}_{j}\Big) \right]^r\mathcal{K}_n(\theta_{j_0})
\]
Denoting $\overset{\curvearrowleft}{\prod}_{u=1}^r A_u = A_rA_{r-1}...A_1=\big[A\big]^r$, we can now expand the $r$-th derivative: 
\[
\Big(\frac{\partial}{\partial t} \Big)^r\mathcal{K}_n(\theta_{i_0})=
\Big(\frac{-1}{m}\Big)^r  \sum_{j_1...j_r=1}^m\left[\overset{\curvearrowleft}{\prod}_{u=1}^{r}  \Big(\mathcal{R}_{j_{u}} \sum_{j=1}^{m}\hat{\Gamma}_{j}\Big)\right]\mathcal{K}_n(\theta_{j_0})
= \Big(\frac{-1}{m}\Big)^r  \sum_{j_1...j_r=1}^m\xi_{j_0...j_r}
\]
where $\xi_{j_0...j_r} = \Big[\overset{\curvearrowleft}{\prod}_{u=1}^{r}  \Big(\mathcal{R}_{j_{u}} \sum_{j=1}^{m}\hat{\Gamma}_{j}\Big)\Big]\mathcal{K}_n(\theta_{j_0})$.\\
Using the notation $\forall_{j,u},~~(\hat{\Gamma}_j)^u \mathcal{R}_j = \mathcal{R}_j^{(u)}$, and noticing that $\mathcal{K}_n(\theta_{j}),\mathcal{R}_j$ depend only on $\theta_{j}$, the following hold:
\begin{align}
    & \forall_{j_1 \neq j_2},~~~ \hat{\Gamma}_{j_1} \mathcal{K}_n(\theta_{j_2}) = 0.\label{one}\\
    & \forall_{u,j_1 \neq j_2},~~~ \hat{\Gamma}_{j_1} \mathcal{R}_{j_2}^{(u)} = 0.\label{two}\\
    & \forall_j,~~~\hat{\Gamma}_{j} \mathcal{R}_j = \mathcal{K}_n(\theta_{j}). \label{three}\\
    & \forall_{j,u,v},~~~\hat{\Gamma}_{j}\mathcal{R}_j^{(u)}\mathcal{R}_j^{(v)} = \mathcal{R}_j^{(u+1)}\mathcal{R}_j^{(v)} + \mathcal{R}_j^{(u)}\mathcal{R}_j^{(v+1)}.\label{four}
\end{align}
where Eq.~\ref{four} is the application of the chain rule.

Using the above equalities, the terms $\xi_{j_0...j_r}$ can be expressed as a sum over a finite set $\bold{S}^r$ as follows:
\[\label{xi_form}
\forall_{j_0,,,j_r},~~~\xi_{j_0...j_r}
= \sum_{\bold{S}^r} \prod_{v=0}^r\mathcal{R}_{j_v}^{(u_v)},~~~~\bold{S}^r:=\{u_v\}_{v=0}^r\Big|  \substack{\forall_{0<v},0\leq u_v \leq r-v\\
2 \leq u_0 \leq r+1\\
\sum_{v=0}^r u_v=r+1}
\]
\paragraph{Example:} for $r=2$, the term $\xi_{j_0,j_1,j_2}$ is expanded as follows:
\[
\xi_{j_0,j_1,j_2} &=& \Big[\overset{\curvearrowleft}{\prod}_{u=1}^{2}  \Big(\mathcal{R}_{j_{u}} \sum_{j=1}^{m}\hat{\Gamma}_{j}\Big)\Big]\mathcal{K}_n(\theta_{j_0})\]
Expanding the multiplication and using Eq.~\ref{three}:
\[
\xi_{j_0,j_1,j_2} &=&  \Big(\mathcal{R}_{j_{2}} \sum_{j=1}^{m}\hat{\Gamma}_{j}\Big)\Big(\mathcal{R}_{j_{1}} \sum_{j=1}^{m}\hat{\Gamma}_{j}\Big)\mathcal{R}_{j_0}^{(1)}
\]
Using the chain rule in Eq.\ref{four}, and eliminating elements using Eq.~\ref{two}:
\[
\xi_{j_0,j_1,j_2} &=& \Big(\mathcal{R}_{j_{2}} \sum_{j=1}^{m}\hat{\Gamma}_{j}\Big)\mathcal{R}_{j_1}\mathcal{R}_{j_0}^{(2)}
= \mathcal{R}_{j_{2}} \mathcal{R}_{j_1}^{(1)}\mathcal{R}_{j_0}^{(2)} + \mathcal{R}_{j_{2}} \mathcal{R}_{j_1}\mathcal{R}_{j_0}^{(3)}
\]
We can now express the result in the formulation of Eq.~\ref{xi_form}
\[
\xi_{j_0,j_1,j_2} &=& \sum_{\bold{S}^2} \prod_{v=0}^2\mathcal{R}_{j_v}^{(u_v)},~~~~\bold{S}^2:=\{u_v\}_{v=0}^2\Big|\substack{\forall_{0<v},0\leq u_v \leq 2-v\\
2 \leq u_0 \leq 3\\
\sum_{v=0}^2 u_v=3}
\]

The $r$'th time derivative of the full ensemble $\mathcal{K}^e$ is given by:
\[
\Big(\frac{\partial}{\partial t} \Big)^r\mathcal{K}^e &=& \frac{1}{m}\sum_{j_0=1}^m \Big(\frac{\partial}{\partial t} \Big)^r\mathcal{K}_n(\theta_{j_0})\\
&=&  \frac{(-1)^r}{m^{r+1}}\sum_{j_0...j_r=1}^m \xi_{j_0...j_r}\\
&=&  \frac{(-1)^r}{m^{r+1}}\sum_{j_0...j_r=1}^m \sum_{\bold{S}^r} ~~\prod_{v=0}^r\mathcal{R}_{j_v}^{(u_v)}\\
&=&\label{full} (-1)^r \sum_{\bold{S}^r} \prod_{v=0}^r \Big(\frac{\sum_{j=1}^m\mathcal{R}_{j}^{(u_v)}}{m}\Big)
\]


Note that for $0\leq u$, the term $\mathcal{R}_{j}^{(u+1)}$ represents the $u$'th time derivative of $\mathcal{K}_n(\theta_j)$ under gradient flow with the loss $\mathcal{L} = \mathcal{R}_j$. Using results\footnote{In \cite{guy}, the $\mathcal{O}_p(n^{-1})$ result was obtained using a conjecture, and demonstrated empirically to be tight. An $\mathcal{O}_p(n^{-0.5})$ result of the same quantity is obtained rigorously in \cite{NTK}, which yields an asymptotic behaviour of $\mathcal{O}_p(n^{-0.5}m^{-1})$ for the ensemble.} from \cite{guy} on wide single fully connected models, we have that $\forall_{u > 1}~~\mathcal{R}_j^{(u)} \sim \mathcal{O}_p(n^{-1})$, and $\mathcal{R}_j^{(1)} \sim \mathcal{O}_p(1)$. Moreover, from the independence of the weights $\forall_{j_1 \neq j_2},~~\theta_{j_1}\indep \theta_{j_2}$, it holds that $\forall_{u,j_1\neq j_2},~~ \mathcal{R}_{j_1}^u \indep \mathcal{R}_{j_2}^u$. Therefore, for any fixed $r$ we may apply the central limit theorem for large $m$ on the terms in Eq.~\ref{full} individually:
\[
 \Big(\frac{\sum_{j=1}^m\mathcal{R}_{j}^{(u)}}{m}\Big) \sim \begin{cases}
 \mathcal{O}_p(\frac{1}{\sqrt{m}n}) & u> 1\\
 \mathcal{O}_p(1) & u = 1 \\
 \mathcal{O}_p(\frac{1}{\sqrt{m}}) & u = 0
 \end{cases}
\]
Plugging back into Eq.~\ref{full} yields the desired result by noticing that $2 \leq u_0$ and $u_r = 0$:
\[
\Big(\frac{\partial}{\partial t} \Big)^r\mathcal{K}^e \sim \mathcal{O}_p(\frac{1}{nm})
\]
\end{proof}

\end{document}